\newtcolorbox{algowrapped}[1][]{%
  breakable,
  floatplacement=H,
  colback=white,
  colframe=black,
  listing only,
  listing options={style=tcblatex},
  title=Semi-Speculative Decoding for CaDDi-AR,
  #1
}
\theoremstyle{plain}
\newtheorem{theorem}{Theorem}[section]
\newtheorem{proposition}[theorem]{Proposition}
\newtheorem{lemma}[theorem]{Lemma}
\theoremstyle{definition}
\newtheorem{definition}[theorem]{Definition}
\theoremstyle{remark}
\title{Non-Markovian Discrete Diffusion with Causal Language Models}
\author{
\textbf{Yangtian Zhang}\thanks{Equal contribution} \quad
\textbf{Sizhuang He}\footnotemark[1] \quad
\textbf{Daniel Levine} \quad
\textbf{Lawrence Zhao} \quad
\textbf{David Zhang} \\
\textbf{Syed Asad Rizvi} \quad
\textbf{Shiyang Zhang} \quad
\textbf{Emanuele Zappala} \quad
\textbf{Rex Ying}\thanks{Correspondence to: \texttt{rex.ying@yale.edu}, \texttt{david.vandijk@yale.edu}} \quad
\textbf{David van Dijk}\footnotemark[2] \\
Yale University, New Haven, CT, USA \\
\texttt{\{yangtian.zhang, sizhuang.he, daniel.levine, lawrence.zhao\}@yale.edu} \\
\texttt{\{david.zhang, syed.rizvi, shiyang.zhang\}@yale.edu} \quad 
\texttt{emanuele.zappala@isu.edu} \\
\texttt{\{rex.ying, david.vandijk\}@yale.edu}
}
\newcommand{\method}{CaDDi}
\DeclareMathOperator{\E}{\mathbb{E}}
\begin{document}

\maketitle

\begin{abstract}
Discrete diffusion models offer a flexible, controllable approach to structured sequence generation, yet they still lag behind causal language models in expressive power. A key limitation lies in their reliance on the Markovian assumption, which restricts each step to condition only on the current state, leading to potential uncorrectable error accumulation. In this paper, we introduce \textbf{CaDDi} (\underline{Ca}usal \underline{D}iscrete \underline{Di}ffusion Model), a discrete diffusion model that conditions on the entire generative trajectory, thereby lifting the Markov constraint and allowing the model to revisit and improve past states. By unifying sequential (causal) and temporal (diffusion) reasoning in a single non-Markovian transformer, CaDDi also treats standard causal language models as a special case and permits the direct reuse of pretrained LLM weights with no architectural changes. Empirically, CaDDi outperforms state-of-the-art discrete diffusion baselines on natural-language benchmarks, substantially narrowing the remaining gap to large autoregressive transformers.
\end{abstract}

\section{Introduction}
\vspace{-5pt}
Autoregressive transformers have become a dominant approach for sequence modeling~\citep{vaswani2017attention, chowdhery2023palm, touvron2023llama}, achieving state-of-the-art performance in many natural language and biological tasks. Their left-to-right decoding paradigm simplifies training via next-token prediction and is supported by large-scale pretraining, unlocking broad linguistic (or domain) knowledge. However, these models can be less flexible for {bidirectional or partially specified} generation, such as text infilling or prompting from arbitrary locations.

By contrast, discrete diffusion models~\citep{dieleman2022continuous, d3pm, gulrajani2024likelihood, gat2024discreteflowmatching} naturally accommodate {controllable} generation scenarios where tokens can be iteratively refined and sampled in a bidirectional manner~\citep{shen2023film}. Recent advances have extended discrete diffusion to continuous time~\citep{campbell2022continuous, shi2024simplified}, improved its training objectives~\citep{shi2024simplified, lou2024discretediffusionmodelingestimating, mdlm, udlm}, and accelerated inference~\citep{park2024textit, liu2024discrete}. Yet, these models often lag behind autoregressive approaches in generation quality~\citep{zheng2024masked}, due in part to their reliance on a \textit{single} latent state for denoising—leading to fragile inference where small decoding errors can accumulate over time~\citep{xu2024energy, zheng2024masked, hu2024mask}.

To address this limitation, in this work, we explore a non-Markovian diffusion framework to discrete sequences by allowing each denoising step to condition on the entire generative trajectory, rather than a single latent state. While prior work such as DART~\citep{DART} has investigated non-Markovian diffusion for continuous data, its formulation relies on continuous kernels and smooth transitions that do not directly translate to the discrete setting. Notably, when instantiated for discrete sequences, non-Markovian diffusion naturally mirrors the structure of token-level autoregressive models—highlighting a close connection between the two.

Leveraging this insight, we propose \method{}, a causal discrete diffusion model that unifies \textit{sequential} (left-to-right) and \textit{temporal} (multi-step) dimensions in a single decoder-only transformer architecture. As a result, \method{} can be trained efficiently via a simple next-block/next-token prediction loss—similar to a causal language model—while preserving the bidirectional control and iterative refinement of diffusion. 

Additionally, we show that its token-level variation \method{}-AR can be viewed as a generalization of traditional autoregressive models ($T=1$ is the special case), making it straightforward to fine-tune a pretrained LLM for discrete diffusion. Such adaptation unlocks flexible generation modes (e.g., text infilling) without sacrificing the rich knowledge encoded by large-scale pretraining.

In summary, our key contributions are as follows.
\begin{itemize}[leftmargin=15pt, labelsep=*, itemsep=1ex]
    \item We introduce a non-Markovian discrete diffusion framework where each denoising step incorporates the full generative trajectory, improving inference robustness.

    \item We propose \method{}, a causal discrete diffusion model that unifies sequential and temporal modeling within a non-Markovian diffusion framework. Its further variation \method{}-AR generalizes traditional causal language models as a special case and can seamlessly adopt pretrained LLMs for discrete diffusion, enabling more controllable and structured generation.

    
    \item Quantitative results show that CaDDi outperforms recent discrete diffusion models, achieving lower generative perplexity on language datasets and stronger reasoning capabilities when leveraging a pretrained LLM.
    
\end{itemize}

\section{General Discrete Diffusion}

Recently, discrete diffusion models~\citep{d3pm} have emerged as a powerful framework. In contrast to their continuous counterparts, which corrupt data by adding Gaussian noise in a real-valued space, discrete diffusion models operate on categorical variables, gradually corrupting tokens before reconstructing them through a learned denoising process.

\paragraph{Forward Process.}
Let \(\mathbf{x}_0 = (\mathbf{x}_0^1, \mathbf{x}_0^2, \ldots, \mathbf{x}_0^L)\) be a sequence of discrete tokens from a vocabulary \(\mathcal{V}\) of size \(|\mathcal{V}|\). The forward (noising) process produces latent variables \(\mathbf{x}_1, \mathbf{x}_2, \ldots, \mathbf{x}_T\) by iteratively corrupting each token according to a time-dependent transition matrix \(\mathbf{Q}_t\):
\begin{align}
    q\bigl(\mathbf{x}_t \mid \mathbf{x}_{t-1}\bigr)
    &= \operatorname{Cat}\Bigl(\mathbf{x}_{t};\,  \mathbf{x}_{t-1}\,\mathbf{Q}_t\Bigr),
\end{align}
where \(\operatorname{Cat}(\cdot;\,\pi)\) denotes the categorical distribution with parameter \(\pi\), and \([\mathbf{Q}_t]_{ij}\) gives the probability of transitioning from state \(i\) to state \(j\) at time \(t\). Due to the Markovian assumption, the marginal distribution at any timestep  can be computed in closed form:
\begin{equation}
q(\mathbf{x}_t \mid \mathbf{x}_0) = \operatorname{Cat}(\mathbf{x}_t; \mathbf{x}_0 \bar{\mathbf{Q}}_t), \quad \text{where} \quad \bar{\mathbf{Q}}_t = \mathbf{Q}_1\mathbf{Q}_2\dots\mathbf{Q}_t.
\end{equation}

Common choices for the transition kernel $\mathbf{Q}_t$ include uniform kernel \(\mathbf{Q_t} = (1-\beta_t) \mathbf{I} + \beta_t \mathbf{1}\mathbf{1}^T/|\mathcal{V}|\) of size \(|\mathcal{V}| \times |\mathcal{V}|\)
and absorbing kernel $\mathbf{Q}_t=\left(1-\beta_t\right) \mathbf{I}+\beta_t \mathbf{1} e_m^{\top}$ of size \((|\mathcal{V}| +1) \times (|\mathcal{V}| + 1)\), where $\mathbf{1}$ is an all-one vector and $e_m^{\top}$ represents a one-hot vector where element at index $m=|\mathcal{V}|+1$ is 1.

\paragraph{Reverse Process.}
The reverse (denoising) model \(p_\theta\bigl(\mathbf{x}_{t-1} \mid \mathbf{x}_t\bigr)\) is commonly parameterized in form of posterior distribution , which is learned to reverse the corruption process
\begin{equation}
\label{eq:markov_posterior}
    p_\theta\bigl(\mathbf{x}_{t-1} \mid \mathbf{x}_t\bigr)= 
    q\bigl(\mathbf{x}_{t-1} \mid \mathbf{x}_t, \mathbf{x}_0=\boldsymbol{\mu}_\theta\left(\mathbf{x}_t, t\bigr)\right)=
    \operatorname{Cat}\left(\mathbf{x}_{t-1} ; \frac{\mathbf{x}_t \mathbf{Q}_t^{\top} \odot \boldsymbol{\mu}_\theta\left(\mathbf{x}_t, t\right) \bar{\mathbf{Q}}_{t-1}}{\boldsymbol{\mu}_\theta\left(\mathbf{x}_t, t\right) \bar{\mathbf{Q}}_t \mathbf{x}_t^{\top}}\right)
\end{equation}
where \(\boldsymbol{\mu}_\theta\left(\mathbf{x}_t, t\right) = p_\theta (\mathbf{x}_0 \mid \mathbf{x}_t)\) is a time-dependent denoiser (\textit{x\textsubscript{0}-parameterization}) mapping \(\mathbf{x}_t\) back to the mean distribution of clean data.

\paragraph{Error During Inference.}
During inference, discrete diffusion models can potentially encounter a variety of sources of error, including inaccuracies arising from heavy-tailed per-token sampling distributions, and independent factorization in denoising distribution $p_\theta\left(\mathbf{x}_0 \mid \mathbf{x}_t\right)=\Pi_i p_\theta\left(\mathbf{x}_0^i \mid \mathbf{x}_t\right)=\Pi_i \boldsymbol{\mu}_\theta^i\left(\mathbf{x}_t, t\right)$, which prevents the model from fully capturing the true posterior $q\left(\mathbf{x}_0 \mid \mathbf{x}_t\right)$ that inherently exhibits dependencies across token dimensions~\citep{xu2024energy}.

Although each individual source of error may be moderate, they can propagate and accumulate across reverse steps. Specifically, incorrectly predicted $\mathbf{x}_t$ will not only impact the immediate denoiser output $\boldsymbol{\mu}_\theta\left(\mathbf{x}_t, t\right)$, but also affect subsequent predictions, leading to further deviations as the trajectory progresses. Even when denoiser predictions are accurate, the constrained form of the approximated posterior can limit the model's ability to self-correct earlier mistakes.\footnote{One illustrative case of such limited self-correction is the \textit{failure-to-remask} phenomenon under absorbing diffusion kernels, See Appendix}

\begin{figure*}[t]
    \centering
    \includegraphics[width=\linewidth]{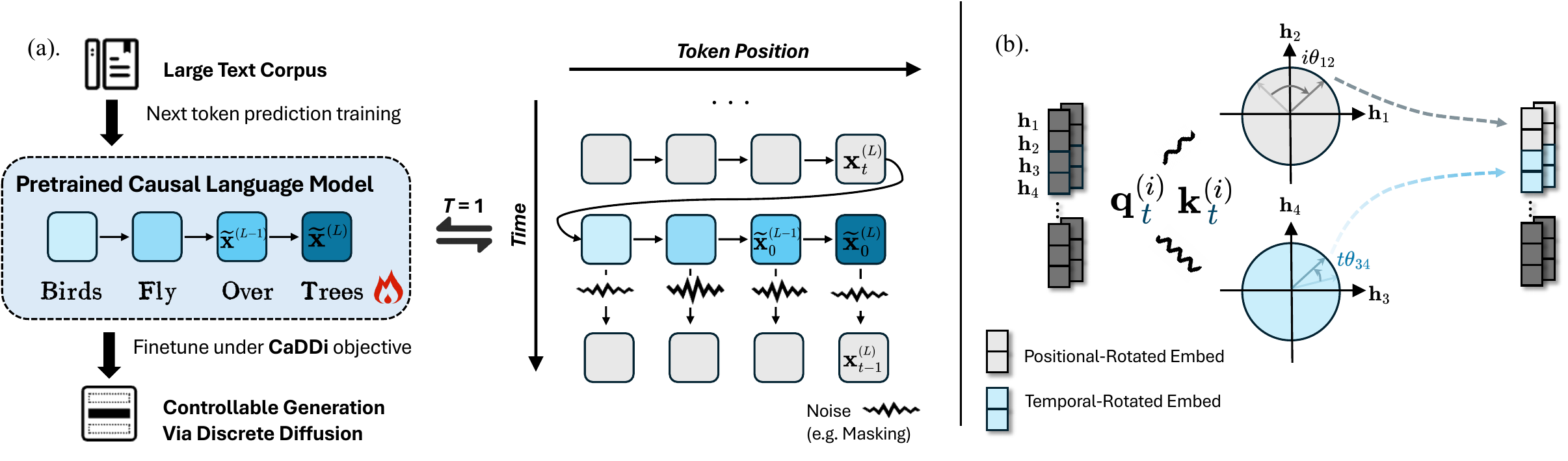}
    \caption{(a). \textbf{Inference paradigm for a standard causal language model versus \method{}-AR.} In \method{}-AR, each timestep first autoregressively denoises the tokens into $\widetilde{\mathbf{x}}_0$, then re-applies noise via the diffusion kernel to obtain ${\mathbf{x}}_{t-1}$. A traditional autoregressive model emerges as the special case of $T=1$, which can be adapted to discrete diffusion by fine-tuning. (b). \textbf{Extending 1D to 2D Rotary Positional Encoding.} Standard rotary encodings for token positions are seamlessly generalized to also encode diffusion timesteps, remaining fully backward-compatible with existing language model architectures.}
    \label{fig:model}
\end{figure*}

\section{Non-Markovian Discrete Diffusion}
\label{sec:framework}

\subsection{Is the Markovian Assumption Necessary?}
Diffusion models can be viewed as a special class of hierarchical variational autoencoders (HVAEs) with a Markovian assumption, where the latent variables consist of progressively corrupted versions of the original data. Let \(\mathbf{x}_0\) be the original data and \(\mathbf{x}_{1:T}\) be the latent variables at timesteps \(1,\dots,T\). The ELBO objective of HVAE can be written as:
\begin{align}
    \max_{\theta, \phi} \;\; \mathcal{L}_{\theta, \phi}^{\mathrm{ELBO}} 
    &= \mathbb{E}_{\mathbf{x}_{1: T} \sim q_\phi(\mathbf{x}_{1: T} \mid \mathbf{x}_0)} \Bigg[ 
        \sum_{t=1}^T \log p_\theta\bigl(\mathbf{x}_{t-1} \mid \mathbf{x}_{t:T}\bigr) 
       + \log p_\theta\bigl(\mathbf{x}_T\bigr)
         \;-\; \log q_\phi\bigl(\mathbf{x}_{1:T} \mid \mathbf{x}_0\bigr) 
    \Bigg],
\end{align}
where \(q_\phi(\mathbf{x}_{1:T} \mid \mathbf{x}_0)\) is the variational posterior distribution, and \(p_\theta(\mathbf{x}_{t-1} \mid \mathbf{x}_{t:T})\) is the generative (reverse) model.
Most of the existing discrete diffusion models make Markovian assumptions by specifying a fixed, non-learnable forward process \(q_\phi\bigl(\mathbf{x}_{1:T} \mid \mathbf{x}_0\bigr)=\prod_{t=1}^T q\bigl(\mathbf{x}_{t} \mid \mathbf{x}_{t-1}\bigr)\), and modeling the reverse process as a time-localized transition \(p_\theta\bigl(\mathbf{x}_{t-1} \mid \mathbf{x}_{t:T}\bigr)\!=\!p_\theta\bigl(\mathbf{x}_{t-1} \mid \mathbf{x}_{t}\bigr)\), which is approximated by a posterior estimator such as \(q\bigl(\mathbf{x}_{t-1} \mid \mathbf{x}_{t}, \mathbf{x}_0 = \mu_\theta(\mathbf{x}_t, t)\bigr)\). While beneficial for computational efficiency, the Markovian assumption compresses all relevant information into a single state $\mathbf{x}_t$, making the model more susceptible to error accumulation through the generative trajectory, which is shown in prior work~\citep{xu2024energy} and our experiments in Figure~\ref{fig:robust}.

In this work, we explore a more flexible alternative: treating discrete diffusion models as general HVAEs \textit{without requiring the Markovian assumption}. Inspired by recent studies~\citep{DART}, we consider a non-Markovian generative process, where the reverse model $p_\theta\left(\mathbf{x}_{t-1} \mid \mathbf{x}_{t: T}\right)$ can access the entire future trajectory $\mathbf{x}_{t: T}$ to denoise $\mathbf{x}_{t-1}$ for denoising in an autoregressive manner. Assuming an independently parameterized forward corruption process, the reverse model simplifies to: $p_\theta\left(\mathbf{x}_{t-1} \mid \mathbf{x}_{t: T}\right)=q\left(\mathbf{x}_{t-1} \mid \mathbf{x}_0 = \mu_\theta\left(\mathbf{x}_{t: T}, t\right)\right)$, relaxing the structural constraints imposed by the Markovian factorization in both the forward and reverse directions.

In the following, we will describe how the non-Markovian discrete diffusion process is constructed. Crucially, we will see that the resulting non-Markovian autoregressive inference mechanism essentially aligns with a causal language model plus an additional temporal dimension—laying the groundwork for our unified spatial-temporal framework (Section~\ref{sec:method}).

\subsection{Non-Markovian Forward Process}
Following ~\citep{DART}, we adopt a simple yet expressive \textit{independent} noising process to ensure that the latent trajectory carry more complementary information across timesteps. Instead of relying on a stepwise Markov chain, we inject independent noise into the original data \(\mathbf{x}_0\) at each timestep, as illustrated in Fig.~\ref{fig:nonmarkov-alg-fig}. Formally, the forward process is defined as:
    \begin{align}
    q(\mathbf{x}_{0:T}) :=  q(\mathbf{x}_0, \mathbf{x}_1, \mathbf{x}_2, ..., \mathbf{x}_T) 
    &= q(\mathbf{x}_0) \prod_{t=1}^{T} q(\mathbf{x}_t | \mathbf{x}_{0:t-1}) 
    = q(\mathbf{x}_0) \prod_{t=1}^{T} q(\mathbf{x}_t | \mathbf{x}_{0}),
    \end{align} 
where the final equality follows from our assumption that noise is added independently at each timestep, conditioned only on $\mathbf{x}_0$. While the per-step marginals $q\left(\mathbf{x}_t \mid \mathbf{x}_{0: t-1}\right)=q\left(\mathbf{x}_t \mid \mathbf{x}_0\right)$ appear similar to those in Markovian diffusion models, the conditional independence of $\mathbf{x}_t$ from $\mathbf{x}_{t-1}$ (given $\mathbf{x}_0$ ) introduces a fundamentally different structure into the forward trajectory.

In this non-Markovian formulation, the forward process depends solely on the marginal corruption kernel $q\left(\mathbf{x}_t \mid \mathbf{x}_0\right)$, rather than the usual Markovian kernel $q\left(\mathbf{x}_t \mid \mathbf{x}_{t-1}\right)$. In practice, one can still leverage standard discrete diffusion kernels-such as the absorbing or uniform kernels-or even mixtures thereof, without enforcing a temporal Markov constraint. Additionally, we can establish the following correspondence:

\begin{proposition}
\label{prop:correspondence}
    An absorbing-state non-Markovian discrete diffusion process with marginal transition kernel \(\bar{\mathbf{Q}}_t = \left(1-\alpha_t\right) \mathbf{I}+\alpha_t \mathbf{1} e_m^{\top}\) admits a bijection to an absorbing-state Markovian discrete diffusion process with marginal transition kernel \(\bar{\mathbf{Q}}_t^* = \left(1-\alpha_t^*\right) \mathbf{I}+\alpha_t^* \mathbf{1} e_m^{\top}\) such that the two processes exhibit identical mutual information decay between $x_{t:T}$ and $x_0$, provided that the coefficients satisfy \(\alpha_t^* = \prod_{\tau =t }^T \alpha_\tau\). Proofs is provided in Appendix~\ref{app:NM_vs_Markov}
\end{proposition}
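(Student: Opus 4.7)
The plan is to reduce the mutual-information claim to an elementary calculation that exploits two structural properties of the absorbing kernel: whenever a token is not absorbed it equals $\mathbf{x}_0$ exactly, and in the Markovian version the mask state is absorbing in time, so $\mathbf{x}_t = m$ forces $\mathbf{x}_{t+1} = \cdots = \mathbf{x}_T = m$. Since both forward processes factor over token positions and over coordinates, I would first reduce to a single coordinate; the sequence-level claim then follows by summing mutual informations across positions.

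For a single position, let $\mathbf{x}_0 \in \mathcal{V}$ and each $\mathbf{x}_\tau \in \mathcal{V} \cup \{m\}$. The key observation is that, in both models, the event $A := \{\mathbf{x}_\tau \neq m \text{ for some } \tau \in \{t,\ldots,T\}\}$ determines $\mathbf{x}_0$ deterministically (every unmasked latent equals $\mathbf{x}_0$), while on the complement $A^c = \{\mathbf{x}_t = \cdots = \mathbf{x}_T = m\}$ the trajectory is a constant all-mask string and is \emph{independent} of $\mathbf{x}_0$ because the kernel sends any source symbol to $m$ with a probability that depends only on the noise schedule. Consequently,
\begin{equation*}
H(\mathbf{x}_0 \mid \mathbf{x}_{t:T}) \;=\; P(A^c)\,H(\mathbf{x}_0), \qquad I(\mathbf{x}_{t:T};\mathbf{x}_0) \;=\; \bigl(1 - P(A^c)\bigr)\,H(\mathbf{x}_0),
\end{equation*}
so it suffices to compute $P(A^c)$ in each process. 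In the non-Markovian model the $\mathbf{x}_\tau$ are conditionally independent given $\mathbf{x}_0$ with marginal mask probability $\alpha_\tau$, giving $P(A^c) = \prod_{\tau=t}^{T} \alpha_\tau$. In the Markovian absorbing model the absorbing property reduces $A^c$ to the event $\{\mathbf{x}_t = m\}$, giving $P(A^c) = \alpha_t^*$. Equating the two expressions yields exactly $\alpha_t^* = \prod_{\tau=t}^{T} \alpha_\tau$; this map is a bijection between the admissible (monotone) schedules, with inverse $\alpha_t = \alpha_t^*/\alpha_{t+1}^*$ and $\alpha_T = \alpha_T^*$.

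The main subtlety I anticipate is not computational but conceptual: one must justify carefully that, conditioned on the all-masked event $A^c$, the distribution of $\mathbf{x}_0$ really is its prior marginal in \emph{both} models. This requires invoking that the masking probabilities are data-independent (so $A^c$ is an event in a sigma-algebra independent of $\mathbf{x}_0$) and that the Markovian absorbing kernel genuinely makes $\{\mathbf{x}_t=m\}$ equivalent to $\{\mathbf{x}_{t:T} \text{ all } m\}$. Once these two facts are in hand, passing from a single token to a length-$L$ sequence is routine because the noising acts independently across positions, so the per-coordinate equality of mutual informations lifts to the full $I(\mathbf{x}_{t:T};\mathbf{x}_0)$.
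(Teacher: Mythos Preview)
Your proposal is correct and follows essentially the same route as the paper: the paper also reduces to a single token, conditions on the all-masked event (their indicator $Z$ is your $A^c$), and reads off $H(\mathbf{x}_0\mid\mathbf{x}_{t:T})=P(A^c)\,H(\mathbf{x}_0)$ in each model, obtaining $P(A^c)=\prod_{\tau=t}^{T}\alpha_\tau$ (non-Markovian) and $P(A^c)=\alpha_t^*$ (Markovian, via absorbance so $\mathbf{x}_{t:T}$ is a function of $\mathbf{x}_t$). The two subtleties you flag---that the masking coins are independent of $\mathbf{x}_0$ and that $\{\mathbf{x}_t=m\}\Leftrightarrow\{\mathbf{x}_{t:T}\text{ all }m\}$ in the absorbing chain---are exactly the points the paper makes explicit, and the bijection/inverse formula you give matches theirs.
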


\begin{figure*}[t]
\centering

\begin{minipage}{0.52\textwidth}
    \centering
    \includegraphics[width=1.05\linewidth]{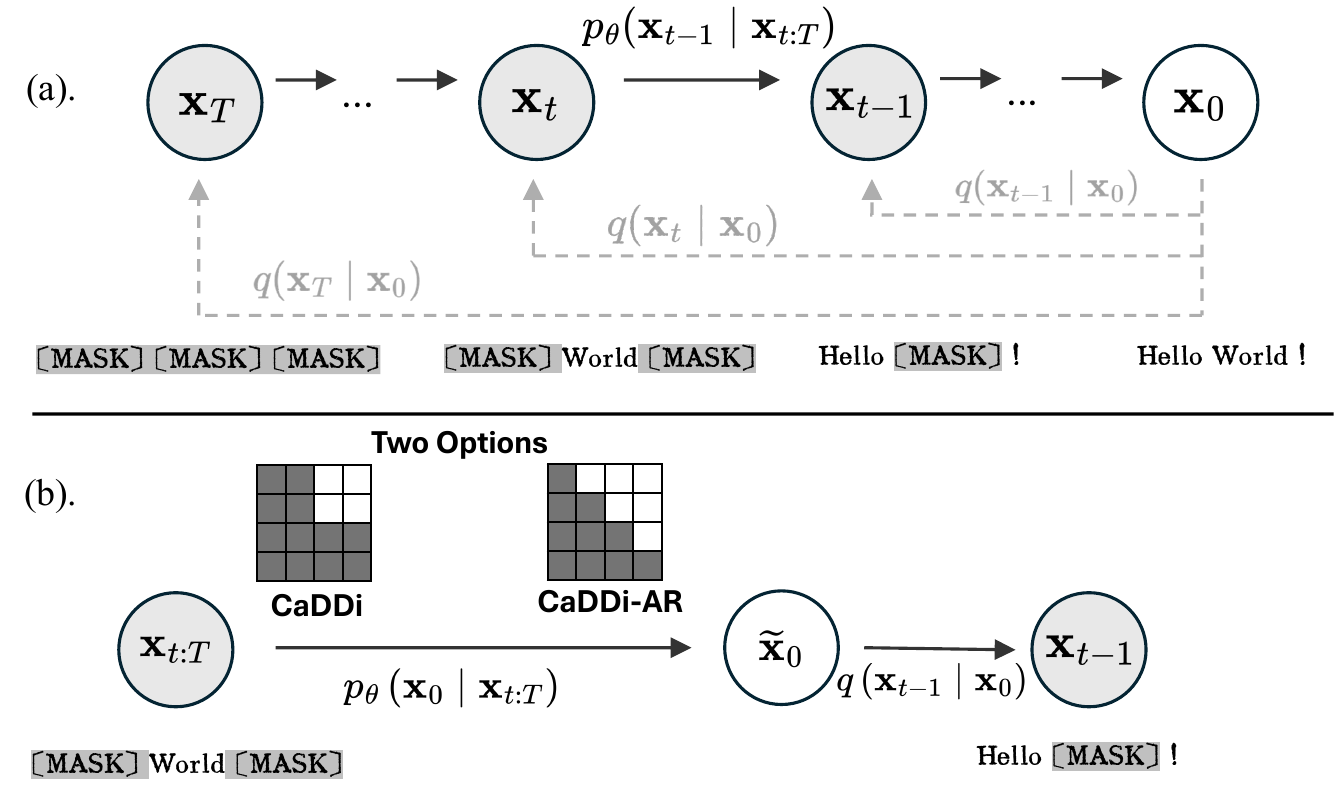}
\end{minipage}
\hfill
\begin{minipage}{0.44\textwidth}
    \centering
    \hspace*{-3.5em}  %
    \scalebox{0.85}{  
    \begin{minipage}{\textwidth}
    \begin{algorithm}[H]
    \caption{Inference for Non-Markovian Discrete Diffusion}
    \label{alg:non_markov_inference}
    \begin{algorithmic}[1]
    \State \textbf{Input:} Prior distribution $q(\mathbf{x}_T)$, model parameters $\theta$
    \State \textbf{Output:} Sampled data $\mathbf{x}_0$
    \State Initialize $\mathbf{x}_T \sim q(\mathbf{x}_T)$ as the noisy input at the final timestep
    \For{$t = T$ down to $1$}
        \State Predict $\widetilde{\mathbf{x}}_{0} \sim \mu_\theta\left( \mathbf{x}_{0} \mid \mathbf{x}_{t:T}, t \right)$
        \State Sample $\mathbf{x}_{t-1} \sim q\left( \mathbf{x}_{t-1} \mid \widetilde{\mathbf{x}}_{0} \right)$
    \EndFor
    \State \Return $\mathbf{x}_0$
    \end{algorithmic}
    \end{algorithm}
    \end{minipage}
    }
\end{minipage}

\vspace{0.5em}

\caption{
Illustration of the Non-Markovian discrete diffusion inference algorithm (right) and its forward/reverse process visualization (left).
}
\label{fig:nonmarkov-alg-fig}

\end{figure*}

\subsection{Non-Markovian Inference Process}
Similar to conventional diffusion models, our non-Markovian discrete diffusion model employs a posterior-inspired parameterization for the reverse process $p_\theta\left(\mathbf{x}_{t-1} \mid \mathbf{x}_t\right)$ However, due to the assumption of an independent corruption kernel—where the forward noise depends only on \(\mathbf{x}_0\)—we can further simplify the posterior form: 
\begin{equation}
    p_\theta(\mathbf{x}_{t-1} \mid \mathbf{x}_{t:T}) := q\bigl(\mathbf{x}_{t-1} \mid \mathbf{x}_{t:T}, \mathbf{x}_0=\boldsymbol{\mu}_\theta\left(\mathbf{x}_{t:T}, t\right)\bigr) = q\bigl(\mathbf{x}_{t-1} \mid  \mathbf{x}_0=\boldsymbol{\mu}_\theta\left(\mathbf{x}_{t:T}, t\right)\bigr)
\end{equation}
Notice that it lift the constraint introduced by \(\mathbf{x}_{t:T}\) in the posterior form and delegating all the flexibility for the denoiser model \(\mu_\theta\). As illustrated in Fig.~\ref{fig:nonmarkov-alg-fig} (right), inference proceeds in an autoregressive manner: at each timestep, we first predict a clean estimate $\widetilde{\mathbf{x}}_0 \sim \mu_\theta\left(\mathbf{x}_{t: T}, t\right)$, then sample $\mathbf{x}_{t-1}$ using the forward corruption kernel $q\left(\mathbf{x}_{t-1} \mid \widetilde{\mathbf{x}}_0\right)$. This loop continues backward through time until the final data sample $\mathbf{x}_0$ is recovered.

\subsection{Evidence Lower Bound}
As with traditional discrete diffusion models, the non-Markovian variant can be trained using a variational objective. Specifically, we optimize the Evidence Lower Bound (ELBO) as:
\begin{equation}
    \mathcal{L}_{\text{non-markov}} = 
    \mathbb{E}_{q (\mathbf{x}_{1:T} \mid \mathbf{x}_0)}
    \
    \log p_\theta(\mathbf{x}_0 \mid \mathbf{x}_{1:T})
    -\operatorname{KL}\bigl(q(\mathbf{x}_T \mid \mathbf{x}_0) \| p_\theta (\mathbf{x}_T)\bigr)
     - \mathcal{L}_T
    \label{eq:elbo_loss}
\end{equation}
 where \(\mathcal{L}_T = \sum_{t=2}^T 
    \mathbb{E}_{q( \mathbf{x}_{t:T} \mid \mathbf{x}_0 )}
    \operatorname{KL}(q (\mathbf{x}_{t-1} \mid \mathbf{x}_0) \| p_\theta\bigl(\mathbf{x}_{t-1} \mid \mathbf{x}_{t:T} )) \).
 The second term is constant for most forward kernels. For specific kernel choices (e.g. absorbing or uniform), the ELBO simplifies further:
\begin{proposition}
\label{prop:objective}
Suppose the non-Markovian diffusion process adopt an absorbing marginal kernel \(q\left(\boldsymbol{x}_t \mid \boldsymbol{x}_0\right) = \operatorname{Cat}\left(\mathbf{x}_t ; \mathbf{x}_0 \bar{\mathbf{Q}}_t\right)\), where \(\bar{\mathbf{Q}}_t = \left(1-\alpha_t\right) \mathbf{I}+\alpha_t \mathbf{1} e_m^{\top}\) and \(\alpha_t\) is a increasing function with \(\alpha_0 \approx 0\) and \(\alpha_T \approx 1\). The ELBO loss in Equation~\eqref{eq:elbo_loss} can be further simplified to (see Appendix~\ref{sec:proof} for derivation):
\begin{equation}
    \mathcal{L}_{\text{absorb}} = 
    \mathbb{E}_{\mathbf{x}_{1: T} \sim q\bigl( \mathbf{x}_{1:T} \mid \mathbf{x}_0\bigr)}
    \sum_{t=1}^T
    \bigl[
    \alpha_{t-1} \mathbf{x}_0^{\top}\log \mu_\theta(\mathbf{x}_{t:T}, t)
    \bigr].
    \label{eq:elbo_loss_absorb}
\end{equation}
\end{proposition}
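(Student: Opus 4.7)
The strategy is to start from the ELBO in Equation~\eqref{eq:elbo_loss} and reduce each per-step KL term using the special structure of the absorbing kernel. Three ingredients do essentially all of the work: (i) the simplified reverse parameterization $p_\theta(\mathbf{x}_{t-1}\mid\mathbf{x}_{t:T}) = q(\mathbf{x}_{t-1}\mid\mathbf{x}_0=\mu_\theta(\mathbf{x}_{t:T},t))$ established earlier from the conditional independence of the forward noise given $\mathbf{x}_0$, (ii) the closed form of the absorbing marginal $q(\mathbf{x}_{t-1}\mid\mathbf{x}_0) = (1-\alpha_{t-1})\mathbf{x}_0 + \alpha_{t-1} e_m$, and (iii) the key observation that both categorical distributions inside the KL assign identical mass $\alpha_{t-1}$ to the mask coordinate $m$.

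First I would substitute (i) and (ii) into the KL appearing in $\mathcal{L}_T$ and expand the divergence as a sum over the vocabulary. The mask coordinate contributes zero because the two distributions coincide there, and on the unmasked support $q(\cdot\mid\mathbf{x}_0)$ is a scaled one-hot at $\mathbf{x}_0$ with mass $(1-\alpha_{t-1})$, while $p_\theta(\cdot\mid\mathbf{x}_{t:T})$ is the same scaling applied to $\mu_\theta(\mathbf{x}_{t:T},t)$. Only the $\mathbf{x}_0$ coordinate of the sum survives, and the $(1-\alpha_{t-1})$ prefactors inside the logarithm cancel against the outer weight, leaving a cross-entropy-like term
\begin{equation*}
\operatorname{KL}\bigl(q(\mathbf{x}_{t-1}\mid\mathbf{x}_0)\,\big\|\,p_\theta(\mathbf{x}_{t-1}\mid\mathbf{x}_{t:T})\bigr) \;=\; -\,c_{t-1}\,\mathbf{x}_0^{\top}\log\mu_\theta(\mathbf{x}_{t:T},t) \;+\;\text{const},
\end{equation*}
where $c_{t-1}$ is a scalar read off from the absorbing marginal and the constant gathers $\theta$-independent entropy factors. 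Second, I would observe that the reconstruction term $\log p_\theta(\mathbf{x}_0\mid\mathbf{x}_{1:T}) = \mathbf{x}_0^\top \log\mu_\theta(\mathbf{x}_{1:T},1)$ supplies the $t=1$ summand in exactly the same form, and the prior-matching term $\operatorname{KL}(q(\mathbf{x}_T\mid\mathbf{x}_0)\,\|\,p_\theta(\mathbf{x}_T))$ is independent of $\theta$ for a fixed absorbing schedule and can be discarded. Pushing the expectation over the full trajectory $\mathbf{x}_{1:T}\sim q(\mathbf{x}_{1:T}\mid\mathbf{x}_0)$ inside the sum yields the target expression~\eqref{eq:elbo_loss_absorb}.

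The main obstacle is bookkeeping rather than technique: I must carefully track the cancellation on the mask coordinate (this is precisely why the absorbing kernel, unlike the uniform kernel, admits such a clean simplification, since under the uniform kernel every coordinate in the vocabulary contributes nontrivially) and then reconcile the scalar $c_{t-1}$ with the $\alpha_{t-1}$ weighting appearing in the statement, which requires a careful convention check on whether $\alpha_t$ is interpreted as the mask-emission probability or as the retention rate (the two choices flip the coefficient between $\alpha_{t-1}$ and $1-\alpha_{t-1}$). Everything else---absorbing $\theta$-independent entropies into the dropped constant, and merging the reconstruction contribution with the KL sum into a single expression indexed by $t=1,\ldots,T$---is mechanical.
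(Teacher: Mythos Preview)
Your proposal is correct and follows essentially the same route as the paper's proof: expand each per-step KL using the non-Markovian reverse parameterization $p_\theta(\mathbf{x}_{t-1}\mid\mathbf{x}_{t:T})=q(\mathbf{x}_{t-1}\mid\mu_\theta)$ together with the closed-form absorbing marginal, observe that the mask coordinate contributes zero so only the one-hot $\mathbf{x}_0$ entry survives as a weighted cross-entropy, drop the constant prior KL, and fold the reconstruction term into the sum as the $t=1$ contribution. Your flagged convention check on $\alpha_{t-1}$ versus $1-\alpha_{t-1}$ is exactly the bookkeeping the paper also has to carry through, and is the only detail left to pin down.
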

The above proposition shows that for absorbing-type forward kernels.  training reduces to a weighted cross-entropy between the model’s prediction and the ground-truth clean input \(\mathbf{x}_0\), with time-dependent weights \(\alpha_{t-1}\) reflecting the degree of corruption.


\vspace{-5pt}
\section{\method{}: \underline{Ca}usal \underline{D}iscrete \underline{Di}ffusion Model}
\label{sec:method}
In natural language and image modeling, decoder-only causal models have demonstrated strong performance in autoregressive modeling, due to their efficient parallel training and scalability~\citep{radford2019language, child2019generatinglongsequencessparse, chen2020generative, ramesh2021zero}. Meanwhile, the reverse process in non-Markovian diffusion models is inherently autoregressive: we decode the entire sequence of latent states \(\mathbf{x}_{t:T}\) to retrieve \(\mathbf{x}_{t-1}\). Building on this observation, we introduce \textbf{\method{}}, a causal discrete diffusion model that unifies the \textit{sequential} dimension (i.e., token order) and the \textit{temporal} dimension (i.e., discrete diffusion timesteps) within a single Transformer. Specifically, \method{} employs a standard left-to-right architecture while conditioning on multiple timesteps from the diffusion chain, enabling it to model non-Markovian discrete diffusion in a unified framework.

\subsection{Unified Sequential and Temporal Modeling}
To accommodate both sequential and temporal dependencies in a single model, a straightforward choice is to construct a data instance as a non-Markovian forward trajectory:
\begin{equation*}
    \bigl(\mathbf{x}_T^{(0)}, \ldots, \mathbf{x}_T^{(L)},\; \mathbf{x}_{T-1}^{(0)}, \ldots, \mathbf{x}_{0}^{(L)}\bigr),
\end{equation*}
where the upper index \((i)\) denotes the token position in the original sequence of length \(L\), and the subscript denotes the diffusion timestep. As illutrated in Fig.~\ref{fig:nonmarkov-alg-fig}, block-wise causal mask is then applied, where we can use the logits from the last block (i.e. block corresponding to the position of \(\mathbf{x}_t\)) as the prediction \(\mathbf{x}_0\) of the denoiser, assuming a \(\mathbf{x}_0\)-parameterization. In inference, we do the \textit{block-wise autoregression} where predicted clean data \(\widetilde{\mathbf{x}}_0\) is used for constructing the next latent variable \(\mathbf{x}_{t-1}\).

\paragraph{2D Rotary Positional Encoding.} 
\label{subsec:2d_rotary}
While it's straightforward to use a decoder-only causal model for modeling the sequential and temporal dependency of latent chains. One of the issue is that modern causal language model 
typically encodes \textit{only} the sequential dimension, via rotary positional encodings~\citep{roformer}. However, for non-Markovian discrete diffusion, we must capture not just the standard token-level sequence but also a temporal dimension corresponding to diffusion timesteps. To address this, we extend the original 1D rotary scheme to a {2D} variant. 

Specifically, standard RoPE in modern language models~\citep{biderman2023pythia} rotates a subset
of the query/key dimensions according to the token position $i$. If $\mathbf{R}^{(i)}$ denotes the rotation matrix parameterized by $i$, the attention weight between positions $i$ and $j$ becomes $\left(\mathbf{R}^{(i)} \mathbf{q}^{(i)}\right)^{\top}\left(\mathbf{R}^{(j)} \mathbf{k}^{(j)}\right)$, where $\mathbf{q}^{(i)}, \mathbf{k}^{(j)}$ are the query/key vectors, and $\mathbf{R}^{(i)}$ applies 2D rotations defined by position index \(i\) to corresponding pairs of dimensions. To incorporate the timestep \(t\), we introduce an additional rotation along a disjoint subspace of the embedding. This results in a block-diagonal like rotation matrix: 
\begin{equation}
\mathbf{R}_t^{(i)}=\left[\begin{array}{cc}
\mathbf{R}_{\mathrm{seq}}^{(i)} & 0 \\
0 & \mathbf{R}_{\mathrm{time}}^{(t)}
\end{array}\right]
\end{equation}
where $\mathbf{R}_{\text {seq }}^{(i)}$ is the valid part of position-based rotation matrix as before, and $\mathbf{R}_{\text {time }}^{(t)}$ applies the same rotational principle to a separate set of dimensions using the timestep $t$. By interleaving temporal based rotation in these additional dimensions, it's easy to observe that when two tokens share the same timepoints $t$:
\begin{equation*}
\begin{aligned}
\left(\mathbf{R}_t^{(i)} \mathbf{q}_t^{(i)}\right)^{\top}\left(\mathbf{R}_t^{(j)} \mathbf{k}_t^{(j)}\right) ={\mathbf{q}_t^{(i)}}^{\top} \mathbf{R}_0^{(j-i)} \mathbf{k}_t^{(j)} 
=\left(\mathbf{R}^{(i)} \mathbf{q}_t^{(i)}\right)^{\top}\left(\mathbf{R}^{(j)} \mathbf{k}_t^{(j)}\right)
\end{aligned}
\end{equation*}
which means that in the same timepoint the sequential attention pattern is identical to that of a conventional causal model, and \(\mathbf{R}_t^{(i)}\) reduces to the usual (1D) rotation in \(i\). 

In practice, feeding all timesteps into the model can be prohibitively large. Thus we have proposed several optional strategies such as latent truncation and trajectory re-composition to compress latent \(\mathbf{x}_{t:T}\) into fixed context window size. Details can be found in Appendix~\ref{sec:implement}

\subsection{CaDDi-AR: Factorization over Token Space}
\label{sec:caddi_ar}
As dicussed in~\citep{DART, xu2024energy, xiao2021tackling}, independent factorization over dimensions in one reverse step can make $p_\theta\left(\mathbf{x}_{t-1} \mid \mathbf{x}_{t: T}\right)$ neglecting the token dependence and fail to match the true posterior $q\left(\mathbf{x}_{t-1} \mid \mathbf{x}_{t: T}\right)$. A straightforward solution following~\citep{DART} is to further factorizing over token space - leading to a token-level autoregression $p_\theta\left(\mathbf{x}_{t-1} \mid \mathbf{x}_{t: T}\right)=\prod_{i=0}^L p_\theta\left(\mathbf{x}_{t-1}^i \mid \mathbf{x}_{t-1}^{0:i-1}, \mathbf{x}_{t: T}\right)$ and next-token prediction loss. We denote this varient as \method{}-AR. Notice it naturally fits our decoder-only causal language model, where the sampling process essentially correpond to autoregressive generation with historical trajectory \(\mathbf{x}_{t:T}\) serving as "prompt". The token-level autoregression decomposing enables \method{}-AR a more fine-grained granularity in per-step generation. 

\paragraph{Semi-Speculative Decoding}
\label{sec:semi-sepculative}
\begin{wrapfigure}[20]{r}{0.50\columnwidth}
  \vspace{-1.0\baselineskip}
  \centering
  \includegraphics[width=0.46\columnwidth]{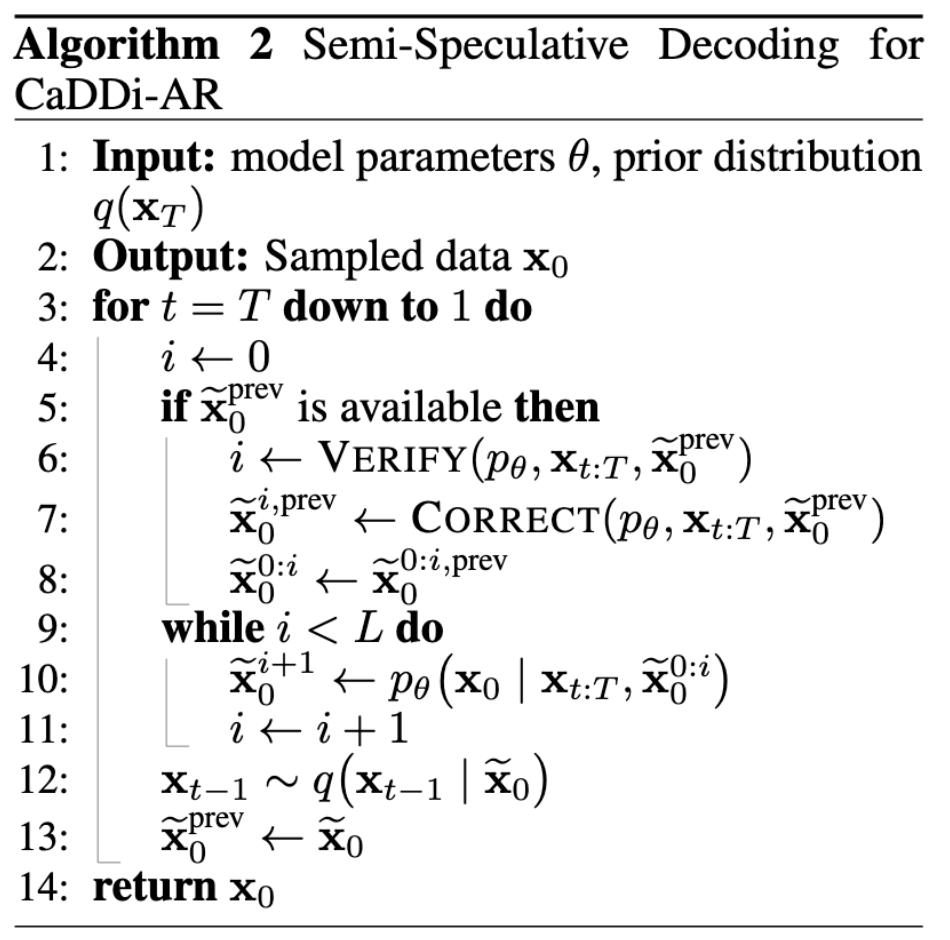}  
  \caption{Semi-Speculative Decoding with CaDDi-AR: The model verifies all tokens in parallel to identify the first rejection index \(i\), then resumes sampling from that point.}
  \vspace{-1.0\baselineskip}
  \label{alg:semi_spec_decoding}
\end{wrapfigure}

\method{}-AR usually gives stronger performance compared with vanilla \method{}. However, naive generation can be much slower as it requires $\mathcal{O}(L \times T)$ function evaluations for a sequence of length $L$ over $T$ timesteps. By leveraging the unique properties of decoder-only causal language modeling, we propose a \textit{semi-speculative decoding} strategy that substantially reduces inference time while maintaining generation quality.

Specifically, although causal language models generate tokens sequentially, they can verify the probabilities of any \textit{pre-drafted} sequence in parallel. Since \method{}-AR shares the same denoising target \(\mathbf{x}_0\) across all timesteps, this suggests a natural procedure: reuse the previous timestep's predictions \(\widetilde{\mathbf{x}}_0^{\text{prev}}\) as a \textit{draft} for the current timestep (see Algorithm~2). The model then \textit{verifies} these drafted tokens in parallel, accepting those that meet a specified confidence threshold (e.g., high probability).

This approach closely resembles speculative decoding~\citep{leviathan2023fast, chen2023accelerating}, with one key difference: we do not rely on a separate, smaller model to propose the draft sequence. Instead, \method{}-AR’s own predictions from the preceding timestep serve as the draft. Like speculative decoding, various verification and correction strategies (e.g., greedy, nucleus sampling) can be employed, ensuring either a comparable or identical sampling distribution while significantly reducing the total number of sampling steps.

\subsection{Leveraging Pretrained LLM for Discrete Diffusion}
\label{sec:finetune}
One key observation is that standard causal language modeling can be seen as a \textbf{special case} of CaDDi-AR under particular settings:  specifically, when using a single-step diffusion process (\(T=1\)) without any latent trajectory conditioning as 'prompt', i.e., \(p_\theta\left(\mathbf{x}_{0}^i \mid \mathbf{x}_{0}^{0:i-1}, \varnothing\right)\). Furthermore, as shown in Section~\ref{subsec:2d_rotary}, the 2D rotary positional encoding can be seamlessly integrated into existing language models, enabling a unified treatment of both sequential and temporal dimensions..

Given these equivalences, one can take a pretrained LLM (trained in a standard causal manner) and further fine-tune it with the CaDDi-AR diffusion objective\footnote{Pretrained causal LLM can also be fine-tuned as vanilla CaDDi models, using a simple causal bidirectional augmentation trick. See Appendix~\ref{sec:implement}.}. By allowing the model to condition on historical latent variables \(\mathbf{x}_{t+1:T}\), we endow it with iterative denoising and bidirectional modeling capabilities. This straightforward adaptation expands the model’s generation modes (e.g., text infilling) while fully leveraging the pretrained language model’s existing knowledge.

While several recent works have fine-tuned autoregressive LLMs as discrete diffusion models~\citep{dream2025, tae2025tess2largescalegeneralist, cetin2025largelanguagemodelsdiffusion, gong2024scaling}, our approach preserves the natural causal masking across both temporal and token dimensions. This design choice significantly mitigates the model drift issues observed in prior methods that replaced causal masks with full attention, which often required careful learning rate tuning to stabilize training~\citep{dream2025}.






\section{Experiments}

\paragraph{Baseline.} For most of our benchmark we compare \method{} against several well-established discrete diffusion models: D3PM \citep{d3pm}, SEDD \citep{lou2024discretediffusionmodelingestimating}, MDLM \citep{mdlm}, UDLM \citep{udlm}, and Discrete Flow Matching \citep{gat2024discreteflowmatching}. We adopt the official MDLM and UDLM codebases, which also include reference implementations of D3PM and SEDD. All models use a 12-layer Transformer (hidden size 768, 12 attention heads), trained with a learning rate of 3e-4, 2500 warm-up steps, and linear learning rate annealing. For evaluation, we sample the same number of sequences across all models. On the Text8 benchmark, we additionally include flow-based methods IAF/SCF \citep{iaf}, Argmax Flow \citep{argmax}, Discrete Flow \citep{discreteflow}, Any-order Autoregressive Models \citep{ardm}, MAC \citep{shih2022training}, and Mult. Diffusion \citep{argmax}. We denote our block-level autoregressive variant as CaDDi and the token-level variant as CaDDi-AR. Unless otherwise noted, all models are trained from scratch and experiments by default use absorbing-marginal kernels, corresponding to the linear noise schedule of D3PM \citep{d3pm}.

\begin{table*}[h]
    \centering
    \caption{{Evaluation on the LM1B dataset.} We report guided Generative Perplexity (PPL) under three pretrained causal language models (GPT-2, Llama-2-7B, and Llama-3.2-3B) at different sampling temperatures of each baseline model and \method{} ($T=1$, $T=0.7$, $T=0.5$). The best performance is \textbf{bolded}, and the second-best is \underline{underlined}.}
    \vspace{3pt}
    \begin{adjustbox}{max width=\linewidth}
    \begin{tabular}{lccc|ccc|ccc|c}
    \toprule
         & \multicolumn{3}{c}{\textsc{GPT2}} 
         & \multicolumn{3}{c}{\textsc{Llama-2}} 
         & \multicolumn{3}{c}{\textsc{Llama-3}}  &\multirow{2}{*}{\textsc{Entropy}}\\
    \cmidrule(lr){2-4}\cmidrule(lr){5-7}\cmidrule(lr){8-10}
    \textbf{Model} & \textsc{T=1} & \textsc{T=0.7} & \textsc{T=0.5} & \textsc{T=1} & \textsc{T=0.7} & \textsc{T=0.5} & \textsc{T=1} & \textsc{T=0.7} & \textsc{T=0.5} \\
    \midrule
        {Data} 
            & 114.79 & --- & --- & 22.97 & --- & --- & 42.20 & --- & --- & --- \\
    \midrule
        UDLM 
            & 313.24 & 318.80 & 328.99 & 110.37 & 111.86 & 119.21 & 207.91 & 219.86 & 231.05 & 6.0 \\
        D3PM     
            & 232.37 & 134.55 & 133.38 & 76.00 & 55.54 & 59.02 & 145.96 & 98.26 & 110.86 & 5.7\\
        SEDD      
            & 201.19 & 148.43 & {81.44} & 77.54 & 66.54 & {46.91} & 144.23 & 84.20 & {60.00} & 5.6\\
        MDLM 
            & 199.45 & 135.85 & 106.20 & 67.86 & 62.34 & 60.09 & 126.35 & 113.19 & 104.71 & 5.6\\
        DFM
            & {182.21} & {94.46} & 106.03 & \underline{67.02} & {38.93} & 61.91 & \textbf{120.09} & {66.22} & 102.89 & 5.7\\
    \midrule
        \textbf{\method{}}  
            &\underline{142.51} & \textbf{64.27} &  \textbf{45.96}
            & 70.27 & \underline{35.40} &  \textbf{23.81}
            & 124.51 & \textbf{58.26} & \textbf{36.79}& 5.7\\
        \textbf{\method{}-AR}  
            & \textbf{139.80} & \underline{76.02} & \underline{67.59} 
            & \textbf{65.93} & \textbf{35.38} & \underline{27.76} 
            & \underline{121.25} & \underline{59.66} & \underline{44.54} & 5.7\\

    \bottomrule
    \end{tabular}
    \end{adjustbox}
    \label{tab:lm1b}
\end{table*}

\paragraph{One Billion Words Dataset}
We evaluate \method{}'s generative capabilities on the One Billion Words dataset (LM1B) \citep{chelba2014billionwordbenchmarkmeasuring}, a large-scale natural language corpus comprising over 30 million English sentences of varying lengths. We follow the tokenization and training setup introduced in DiffusionBERT \citep{he2022diffusionbertimprovinggenerativemasked}. For UDLM, we use the pretrained weights provided by the authors. For other baselines, we retrain the models using their official codebases, as checkpoints are not publicly available.

To evaluate the quality of model-generated text, we report generative perplexity (Gen PPL), computed using several large language models as oracles, including GPT-2 \citep{radford2019language}, Llama-2 (7B) \citep{touvron2023llama2openfoundation}, and Llama-3 (3B) \citep{dubey2024llama}. All oracle models are pretrained on large-scale natural language corpora. To assess the diversity of generated outputs, we additionally compute the entropy of the generated text set without applying temperature scaling. Further details on these metrics are provided in Appendix~\ref{sec:appendix-metrics}.

As shown in Table~\ref{tab:lm1b}, CaDDi-AR and CaDDi consistently outperform baselines in generative perplexity across the three language model oracles. Notably, at low temperature, CaDDi alone performs comparably to—or even better than—CaDDi-AR. We hypothesize that this is due to the long-tail issue being more pronounced in block generation with fewer steps, where lower temperature helps mitigate sampling noise. Importantly, our approach maintains diversity, as indicated by comparable entropy scores.

\vspace{-5pt}
\paragraph{Text8 Dataset}
Following prior work~\citep{d3pm, shi2024simplified}, we trained a discrete diffusion model on short text chunks of length 256 from the text8 dataset. We use the same dataset split as in previous studies, training on the training set and reporting performance on the test set using the standard bits-per-dimension (BPD) metric. The BPD is defined as: $-\frac{1}{L} \sum_{i=1}^L \log _2 p\left(\mathbf{x}_i\right)$. Note that the bound of log-likelihood depends on the discretization of the diffusion schedule. To enable fair comparisons, we report the number of discretization steps used for each diffusion-based model.

We omit BPD results for CaDDi-AR due to the additional approximation error of likelihood introduced by its token-level autoregressive decomposition. As shown in Table~\ref{tab:text8}, under the same discretization setting (64 steps), CaDDi outperforms other discrete diffusion models. While any-order autoregressive models can be viewed as a special case of discrete diffusion with 256 steps, our 64-step model already approaches their performance.
\vspace{-5pt}
\begin{table}[h]
\caption{
\textbf{Evaluation results on Text8 Dataset} reporting layers, steps/discretization, bits-per-dimension (BPD), perplexity, and negative log-likelihood (NLL) for all compared models. For diffusion-based and any-order models, the values represent variational upper bounds.
Rows shaded in gray correspond to evaluations at 64 steps, enabling a direct comparison with CaDDi.
}
\centering
\resizebox{\textwidth}{!}{%
\begin{tabular}{lc ccccc}
\toprule
\textbf{Model} & \textbf{Layers} & \textbf{Steps/Discretization} & \textbf{BPD}\(\downarrow\) & \textbf{Perplexity}\(\downarrow\) & \textbf{NLL}\(\downarrow\) \\
\midrule
\textit{Autoregressive} \\
\quad IAF/SCF & 12 & - & 1.88 & 3.68 & 1.30 \\
\quad Argmax Flow & 12 & - & 1.39 & 2.62 & 0.96 \\
\quad Discrete Flow & 3$\times$8 & - & \textbf{1.23} & \textbf{2.35} & \textbf{0.85}\\
\quad Autoregressive & 12 & - & \textbf{1.23} & \textbf{2.35} & \textbf{0.85} \\
\midrule
\textit{Any Order Autoregressive} \\
\quad ARDM & 12 & - & $\leq$1.43 & $\leq$2.69 & $\leq$0.99 \\
\quad MAC & 12 & - & $\leq$1.40 & $\leq$2.64 & $\leq$0.97 \\
\midrule
\textit{Diffusion} \\
\quad Mult. Diffusion & 12 & 1000 & $\leq$1.72 & $\leq$3.29 & $\leq$1.19 \\
\quad D3PM Absorb & 12 & $\infty$ (continuous) & $\leq$1.45 & $\leq$2.73 & $\leq$1.01 \\
\rowcolor{gray!20} \quad & 12 & 64 & $\leq$1.51 & $\leq$2.85 & $\leq$1.05 \\
\quad SEDD Absorb & 12 & $\infty$ (continuous) & $\leq$\underline{1.39} & $\leq$\underline{2.62} & $\leq$\underline{0.96} \\
\rowcolor{gray!20} \quad & 12 & 64 & $\leq$1.46 & $\leq$2.75 & $\leq$1.01 \\
\quad UDLM & 12 & $\infty$ (continuous) & $\leq$1.44 & $\leq$2.71 & $\leq$1.00 \\
\rowcolor{gray!20} \quad & 12 & 64 & $\leq$1.60 & $\leq$3.03 & $\leq$1.11 \\
\quad MDLM & 12 & $\infty$ (continuous) & $\leq$1.40 & $\leq$2.64 & $\leq$0.97 \\
\rowcolor{gray!20} \quad & 12 & 64 & $\leq$1.46 & $\leq$2.75 & $\leq$1.01 \\
\rowcolor{gray!40} \quad CaDDi & 12 & 64 & $\leq$\textbf{1.41} & $\leq$\textbf{2.66} & $\leq$\textbf{0.98} \\
\bottomrule
\end{tabular}%
}
\label{tab:text8}
\vspace{-10pt}
\end{table}

\paragraph{General Reasoning Datasets with Fine-tuned LLM}

As shown in Section~\ref{sec:finetune}, CaDDi-AR retains compatibility with standard causal language models and can be fine-tuned from pretrained LLM checkpoints without architectural modifications. To evaluate its general reasoning ability, we fine-tune a 1.5B QWen model using the CaDDi-AR diffusion objective on a range of natural language understanding benchmarks, including ARC-Challenge, ARC-Easy~\citep{arc}, BoolQ~\citep{clark2019boolq}, PIQA~\citep{Bisk2020}, RACE~\citep{lai-etal-2017-race}, Social IQA~\citep{sap2019socialiqacommonsensereasoningsocial}, and LAMBADA~\citep{lambada_dataset}. These tasks span multiple reasoning paradigms, from commonsense inference to multi-hop reading comprehension. We benchmark with model of comparable size, including causal language model QWen2~\citep{qwen2}, GPT2~\citep{radford2019language}, TinyLlama~\citep{zhang2024tinyllamaopensourcesmalllanguage} and a discrete diffusion model MDM~\citep{nie2025scalingmaskeddiffusionmodels} trained from scratch. 

As shown in Table~\ref{tab:main-results}, CaDDi-AR achieves consistent gains over  diffusion-based baselines of comparable size. It also include +1.9\% improvement on ARC-Challenge and a +2.4\% gain on LAMBADA over the base QWen model. These results suggest that CaDDi-AR’s non-Markovian formulation supports more robust and revisable reasoning, while maintaining compatibility with existing language model infrastructures. It 's worth noting that compared with MDM, CaDDi-AR requires much less training effort.  
\vspace{-10pt}
\begin{table}[t]
\centering
\caption{{Accuracy (\%) on a range of natural language reasoning benchmarks}.}
\label{tab:main-results}
\resizebox{0.99\textwidth}{!}{  
\begin{tabular}{lcccccccc}
\toprule
\textbf{Model} & \textbf{Size} & \textbf{ARC-Chal.} & \textbf{ARC-Easy} & \textbf{BoolQ} & \textbf{PIQA} & \textbf{RACE} & \textbf{Social IQA} & \textbf{LAMBADA} \\
\midrule
QWen2-1.5B & 1.5B & 33.7 & 66.1 & \textbf{72.6} & 75.4 & \textbf{36.6} & 45.8 & 63.9 \\
\midrule
GPT-2  & 1.5B & 28.5  & 51.1 & 61.8 & 70.5 & 33.1 & 40.3 & 44.6 \\
TinyLlama & 1.1B & 29.9 & 52.2 & 59.4 & 70.3 & 35.6 & 39.4 & 43.2 \\
MDM & 1.1B & - & 48.7 & 62.2 & 69.5 & 35.6 & 41.0 & 52.7 \\
\midrule
CaDDi-AR & 1.5B & \textbf{34.2} & \textbf{67.8} & 71.6 & 75.4 & 34.3 & \textbf{46.9} & \textbf{66.3} \\
\bottomrule
\vspace{-10pt}
\end{tabular}
\vspace{-10pt}
}
\end{table}


\vspace{-1pt}
\paragraph{Conditional Text Generation}
\begin{wraptable}{r}{0.55\textwidth}
\vspace{-10pt}  
\caption{{Comparison of GPT-2, CaDDi, and CaDDi-CFG on the Amazon Polarity dataset.} Generation is
  conditioned on either positive or negative.}
\vspace{3pt}
\centering
\begin{adjustbox}{max width=0.55\textwidth}
\begin{tabular}{lcc}
\toprule
\textbf{Model} & {Condition} & \textbf{Sentiment Accuracy (\%)} \\ 
\midrule
\textbf{GPT-2}  & Positive & 73.07 \\
                & Negative & 75.18 \\
\midrule
\textbf{CaDDi-CFG\(_{\gamma=1}\)}  & Positive & 71.37 \\
                & Negative & 85.42 \\
\midrule
\textbf{CaDDi-CFG\(_{\gamma=1.25}\)} & Positive & 73.61 \\
                & Negative & 85.92 \\
\bottomrule
\end{tabular}
\end{adjustbox}
\label{tab:sentiment-comparison}
\vspace{-10pt}  
\end{wraptable}

We also evaluate conditional text generation on the Amazon Polarity dataset~\citep{mcauley2013hidden}, which consists of 3.6M Amazon reviews labeled as positive or negative. We adapt this task as text infilling by prepending a label-based prompt to each review (see Appendix~\ref{sec:amazon_process} for details). We train the conditional generator \(p_\theta\left(\mathbf{x}_0 \mid \mathbf{x}_t, y\right)\) alongside unconditional one  \(p_\theta\left(\mathbf{x}_0 \mid \mathbf{x}_t\right)\), by preserving certain parts of the text as fixed.

We measure sentiment accuracy (SA) using a fine-tuned DistilBERT classifier. As shown in Table~\ref{tab:sentiment-comparison}, our approach achieves performance comparable to a fine-tuned GPT-2 on the same dataset while offering more flexible generation (unlike GPT2 only allow prompting from begining, our method allows for prompting from arbitrary parts of the text, such as the middle or the title. Examples are shown in Figure~\ref{fig:amazon_result_negative}). Furthermore, by applying classifier-free guidance (denoted as CaDDi-CFG)~\citep{ho2022classifier, udlm} with different guidance scales \(\gamma\), we generate reviews that better align with the given prompts.
\vspace{-10pt}

\paragraph{Additional Analysis and Ablation Study.}
\label{sec:robust}
\begin{wrapfigure}{r}{0.4\linewidth}
    \vspace{-20pt}
    \centering
    \includegraphics[width=\linewidth]{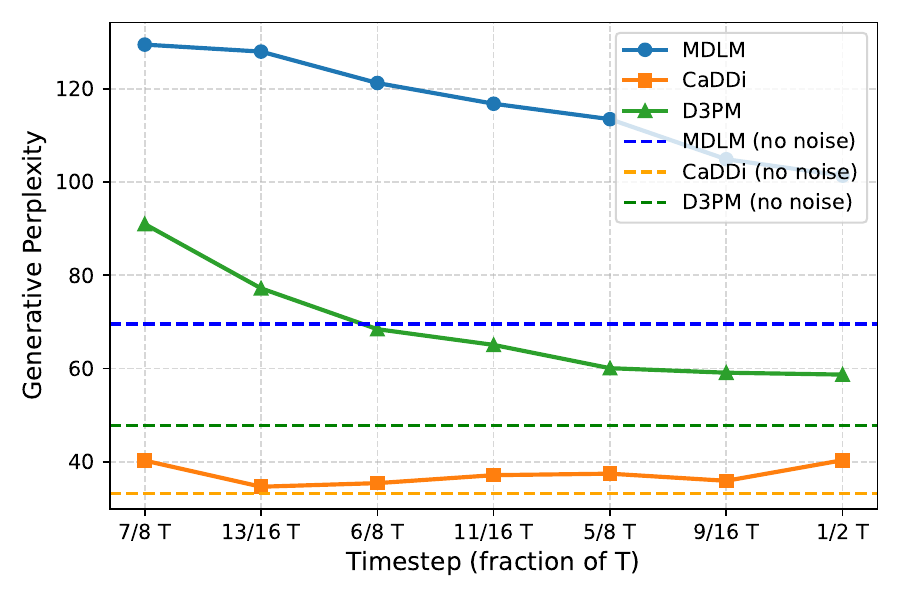}
    \caption{{Generation performance under manually injected noise at different timestep}}
    \label{fig:robust}
    \vspace{-10pt}
\end{wrapfigure}
To assess the inference robustness of discrete diffusion models, we perform an additional ablation study by manually injecting controlled noise (i.e., incorrect predictions) at various timesteps during generation, simulating potential inference-time errors. As shown in Figure~\ref{fig:robust}, introducing noise at earlier stages generally leads to greater error accumulation. Nonetheless, our proposed CaDDi framework consistently exhibits stronger resilience compared to D3PM and MDLM, maintaining higher generation quality under perturbations across all settings. We remain the full ablation study and discussion in Appendix~\ref{sec:addition_experiment}.

\vspace{-5pt}
\section{Conclusion}
\vspace{-5pt}
We introduced \emph{CaDDi}, a causal discrete diffusion framework that relaxes the traditional Markovian assumptions in favor of an autoregressive inference process. By explicitly conditioning each denoising step on the entire future trajectory, CaDDi captures richer temporal dependencies and leverages iterative refinement. Critically, our approach can also be built atop existing causal language models---bridging standard sequence modeling with powerful diffusion capabilities---while preserving both knowledge from large-scale training and the flexibility of iterative editing.


\section*{Acknowledgement}
The authors thank collaborators and contributors from across institutions for their invaluable support and insights throughout this project. This work was supported in part by the National Institutes of Health (NIH) grant R35GM143072–01, the National Science Foundation (NSF) grant IIS Division of Information \& Intelligent Systems 2443528, and the Yale Colton Center Award, all awarded to Prof. David van Dijk; and by the National Science Foundation (NSF) grants IIS Division of Information \& Intelligent Systems 2403317 and CNS Division of Computer and Network Systems 2431504, as well as the Envisioning Artificial Intelligence at Yale 2025 initiative from the Yale Office of the Provost, awarded to Prof. Rex Ying.
\bibliographystyle{plainnat} 
\bibliography{references}
\appendix
\newpage
\section*{NeurIPS Paper Checklist}

The checklist is designed to encourage best practices for responsible machine learning research, addressing issues of reproducibility, transparency, research ethics, and societal impact. Do not remove the checklist: {\bf The papers not including the checklist will be desk rejected.} The checklist should follow the references and follow the (optional) supplemental material.  The checklist does NOT count towards the page
limit. 

Please read the checklist guidelines carefully for information on how to answer these questions. For each question in the checklist:
\begin{itemize}
    \item You should answer \answerYes{}, \answerNo{}, or \answerNA{}.
    \item \answerNA{} means either that the question is Not Applicable for that particular paper or the relevant information is Not Available.
    \item Please provide a short (1–2 sentence) justification right after your answer (even for NA). 
\end{itemize}

{\bf The checklist answers are an integral part of your paper submission.} They are visible to the reviewers, area chairs, senior area chairs, and ethics reviewers. You will be asked to also include it (after eventual revisions) with the final version of your paper, and its final version will be published with the paper.

The reviewers of your paper will be asked to use the checklist as one of the factors in their evaluation. While "\answerYes{}" is generally preferable to "\answerNo{}", it is perfectly acceptable to answer "\answerNo{}" provided a proper justification is given (e.g., "error bars are not reported because it would be too computationally expensive" or "we were unable to find the license for the dataset we used"). In general, answering "\answerNo{}" or "\answerNA{}" is not grounds for rejection. While the questions are phrased in a binary way, we acknowledge that the true answer is often more nuanced, so please just use your best judgment and write a justification to elaborate. All supporting evidence can appear either in the main paper or the supplemental material, provided in appendix. If you answer \answerYes{} to a question, in the justification please point to the section(s) where related material for the question can be found.

IMPORTANT, please:
\begin{itemize}
    \item {\bf Delete this instruction block, but keep the section heading ``NeurIPS Paper Checklist"},
    \item  {\bf Keep the checklist subsection headings, questions/answers and guidelines below.}
    \item {\bf Do not modify the questions and only use the provided macros for your answers}.
\end{itemize}


\begin{enumerate}

\item {\bf Claims}
    \item[] Question: Do the main claims made in the abstract and introduction accurately reflect the paper's contributions and scope?
    \item[] Answer: \answerYes{} 
    \item[] Justification: The claims accuraltely reflects the paper's contributions and scope and are well supported by the experiments.
    \item[] Guidelines:
    \begin{itemize}
        \item The answer NA means that the abstract and introduction do not include the claims made in the paper.
        \item The abstract and/or introduction should clearly state the claims made, including the contributions made in the paper and important assumptions and limitations. A No or NA answer to this question will not be perceived well by the reviewers. 
        \item The claims made should match theoretical and experimental results, and reflect how much the results can be expected to generalize to other settings. 
        \item It is fine to include aspirational goals as motivation as long as it is clear that these goals are not attained by the paper. 
    \end{itemize}

\item {\bf Limitations}
    \item[] Question: Does the paper discuss the limitations of the work performed by the authors?
    \item[] Answer: \answerYes{} 
    \item[] Justification: The paper discusses its limitations, especially on inference speeds.
    \item[] Guidelines:
    \begin{itemize}
        \item The answer NA means that the paper has no limitation while the answer No means that the paper has limitations, but those are not discussed in the paper. 
        \item The authors are encouraged to create a separate "Limitations" section in their paper.
        \item The paper should point out any strong assumptions and how robust the results are to violations of these assumptions (e.g., independence assumptions, noiseless settings, model well-specification, asymptotic approximations only holding locally). The authors should reflect on how these assumptions might be violated in practice and what the implications would be.
        \item The authors should reflect on the scope of the claims made, e.g., if the approach was only tested on a few datasets or with a few runs. In general, empirical results often depend on implicit assumptions, which should be articulated.
        \item The authors should reflect on the factors that influence the performance of the approach. For example, a facial recognition algorithm may perform poorly when image resolution is low or images are taken in low lighting. Or a speech-to-text system might not be used reliably to provide closed captions for online lectures because it fails to handle technical jargon.
        \item The authors should discuss the computational efficiency of the proposed algorithms and how they scale with dataset size.
        \item If applicable, the authors should discuss possible limitations of their approach to address problems of privacy and fairness.
        \item While the authors might fear that complete honesty about limitations might be used by reviewers as grounds for rejection, a worse outcome might be that reviewers discover limitations that aren't acknowledged in the paper. The authors should use their best judgment and recognize that individual actions in favor of transparency play an important role in developing norms that preserve the integrity of the community. Reviewers will be specifically instructed to not penalize honesty concerning limitations.
    \end{itemize}

\item {\bf Theory assumptions and proofs}
    \item[] Question: For each theoretical result, does the paper provide the full set of assumptions and a complete (and correct) proof?
    \item[] Answer: \answerYes{} 
    \item[] Justification: Theoretical claims are supported with justifications and proofs.
    \item[] Guidelines:
    \begin{itemize}
        \item The answer NA means that the paper does not include theoretical results. 
        \item All the theorems, formulas, and proofs in the paper should be numbered and cross-referenced.
        \item All assumptions should be clearly stated or referenced in the statement of any theorems.
        \item The proofs can either appear in the main paper or the supplemental material, but if they appear in the supplemental material, the authors are encouraged to provide a short proof sketch to provide intuition. 
        \item Inversely, any informal proof provided in the core of the paper should be complemented by formal proofs provided in appendix or supplemental material.
        \item Theorems and Lemmas that the proof relies upon should be properly referenced. 
    \end{itemize}

    \item {\bf Experimental result reproducibility}
    \item[] Question: Does the paper fully disclose all the information needed to reproduce the main experimental results of the paper to the extent that it affects the main claims and/or conclusions of the paper (regardless of whether the code and data are provided or not)?
    \item[] Answer: \answerYes{} 
    \item[] Justification: Experiment details are discussed in great details.
    \item[] Guidelines:
    \begin{itemize}
        \item The answer NA means that the paper does not include experiments.
        \item If the paper includes experiments, a No answer to this question will not be perceived well by the reviewers: Making the paper reproducible is important, regardless of whether the code and data are provided or not.
        \item If the contribution is a dataset and/or model, the authors should describe the steps taken to make their results reproducible or verifiable. 
        \item Depending on the contribution, reproducibility can be accomplished in various ways. For example, if the contribution is a novel architecture, describing the architecture fully might suffice, or if the contribution is a specific model and empirical evaluation, it may be necessary to either make it possible for others to replicate the model with the same dataset, or provide access to the model. In general. releasing code and data is often one good way to accomplish this, but reproducibility can also be provided via detailed instructions for how to replicate the results, access to a hosted model (e.g., in the case of a large language model), releasing of a model checkpoint, or other means that are appropriate to the research performed.
        \item While NeurIPS does not require releasing code, the conference does require all submissions to provide some reasonable avenue for reproducibility, which may depend on the nature of the contribution. For example
        \begin{enumerate}
            \item If the contribution is primarily a new algorithm, the paper should make it clear how to reproduce that algorithm.
            \item If the contribution is primarily a new model architecture, the paper should describe the architecture clearly and fully.
            \item If the contribution is a new model (e.g., a large language model), then there should either be a way to access this model for reproducing the results or a way to reproduce the model (e.g., with an open-source dataset or instructions for how to construct the dataset).
            \item We recognize that reproducibility may be tricky in some cases, in which case authors are welcome to describe the particular way they provide for reproducibility. In the case of closed-source models, it may be that access to the model is limited in some way (e.g., to registered users), but it should be possible for other researchers to have some path to reproducing or verifying the results.
        \end{enumerate}
    \end{itemize}

\item {\bf Open access to data and code}
    \item[] Question: Does the paper provide open access to the data and code, with sufficient instructions to faithfully reproduce the main experimental results, as described in supplemental material?
    \item[] Answer: \answerNo{} 
    \item[] Justification: The paper doesn't release its code at this time but great details on experiment setups are provided and will release its code at an appropriate time.
    \item[] Guidelines:
    \begin{itemize}
        \item The answer NA means that paper does not include experiments requiring code.
        \item Please see the NeurIPS code and data submission guidelines (\url{https://nips.cc/public/guides/CodeSubmissionPolicy}) for more details.
        \item While we encourage the release of code and data, we understand that this might not be possible, so “No” is an acceptable answer. Papers cannot be rejected simply for not including code, unless this is central to the contribution (e.g., for a new open-source benchmark).
        \item The instructions should contain the exact command and environment needed to run to reproduce the results. See the NeurIPS code and data submission guidelines (\url{https://nips.cc/public/guides/CodeSubmissionPolicy}) for more details.
        \item The authors should provide instructions on data access and preparation, including how to access the raw data, preprocessed data, intermediate data, and generated data, etc.
        \item The authors should provide scripts to reproduce all experimental results for the new proposed method and baselines. If only a subset of experiments are reproducible, they should state which ones are omitted from the script and why.
        \item At submission time, to preserve anonymity, the authors should release anonymized versions (if applicable).
        \item Providing as much information as possible in supplemental material (appended to the paper) is recommended, but including URLs to data and code is permitted.
    \end{itemize}

\item {\bf Experimental setting/details}
    \item[] Question: Does the paper specify all the training and test details (e.g., data splits, hyperparameters, how they were chosen, type of optimizer, etc.) necessary to understand the results?
    \item[] Answer: \answerYes{} 
    \item[] Justification: Experiment details are specified.
    \item[] Guidelines:
    \begin{itemize}
        \item The answer NA means that the paper does not include experiments.
        \item The experimental setting should be presented in the core of the paper to a level of detail that is necessary to appreciate the results and make sense of them.
        \item The full details can be provided either with the code, in appendix, or as supplemental material.
    \end{itemize}

\item {\bf Experiment statistical significance}
    \item[] Question: Does the paper report error bars suitably and correctly defined or other appropriate information about the statistical significance of the experiments?
    \item[] Answer: \answerNo{} 
    \item[] Justification: Error bars are not reported because it would be too computationally
expensive
    \item[] Guidelines:
    \begin{itemize}
        \item The answer NA means that the paper does not include experiments.
        \item The authors should answer "Yes" if the results are accompanied by error bars, confidence intervals, or statistical significance tests, at least for the experiments that support the main claims of the paper.
        \item The factors of variability that the error bars are capturing should be clearly stated (for example, train/test split, initialization, random drawing of some parameter, or overall run with given experimental conditions).
        \item The method for calculating the error bars should be explained (closed form formula, call to a library function, bootstrap, etc.)
        \item The assumptions made should be given (e.g., Normally distributed errors).
        \item It should be clear whether the error bar is the standard deviation or the standard error of the mean.
        \item It is OK to report 1-sigma error bars, but one should state it. The authors should preferably report a 2-sigma error bar than state that they have a 96\% CI, if the hypothesis of Normality of errors is not verified.
        \item For asymmetric distributions, the authors should be careful not to show in tables or figures symmetric error bars that would yield results that are out of range (e.g. negative error rates).
        \item If error bars are reported in tables or plots, The authors should explain in the text how they were calculated and reference the corresponding figures or tables in the text.
    \end{itemize}

\item {\bf Experiments compute resources}
    \item[] Question: For each experiment, does the paper provide sufficient information on the computer resources (type of compute workers, memory, time of execution) needed to reproduce the experiments?
    \item[] Answer: \answerYes{} 
    \item[] Justification: Experiment compute resourses are listed.
    \item[] Guidelines:
    \begin{itemize}
        \item The answer NA means that the paper does not include experiments.
        \item The paper should indicate the type of compute workers CPU or GPU, internal cluster, or cloud provider, including relevant memory and storage.
        \item The paper should provide the amount of compute required for each of the individual experimental runs as well as estimate the total compute. 
        \item The paper should disclose whether the full research project required more compute than the experiments reported in the paper (e.g., preliminary or failed experiments that didn't make it into the paper). 
    \end{itemize}
    
\item {\bf Code of ethics}
    \item[] Question: Does the research conducted in the paper conform, in every respect, with the NeurIPS Code of Ethics \url{https://neurips.cc/public/EthicsGuidelines}?
    \item[] Answer: \answerYes{} 
    \item[] Justification: This research conform, in every respect, with the NeurIPS Code of Ethics
    \item[] Guidelines:
    \begin{itemize}
        \item The answer NA means that the authors have not reviewed the NeurIPS Code of Ethics.
        \item If the authors answer No, they should explain the special circumstances that require a deviation from the Code of Ethics.
        \item The authors should make sure to preserve anonymity (e.g., if there is a special consideration due to laws or regulations in their jurisdiction).
    \end{itemize}

\item {\bf Broader impacts}
    \item[] Question: Does the paper discuss both potential positive societal impacts and negative societal impacts of the work performed?
    \item[] Answer: \answerYes{} 
    \item[] Justification: Social impacts are discussed in this paper.
    \item[] Guidelines:
    \begin{itemize}
        \item The answer NA means that there is no societal impact of the work performed.
        \item If the authors answer NA or No, they should explain why their work has no societal impact or why the paper does not address societal impact.
        \item Examples of negative societal impacts include potential malicious or unintended uses (e.g., disinformation, generating fake profiles, surveillance), fairness considerations (e.g., deployment of technologies that could make decisions that unfairly impact specific groups), privacy considerations, and security considerations.
        \item The conference expects that many papers will be foundational research and not tied to particular applications, let alone deployments. However, if there is a direct path to any negative applications, the authors should point it out. For example, it is legitimate to point out that an improvement in the quality of generative models could be used to generate deepfakes for disinformation. On the other hand, it is not needed to point out that a generic algorithm for optimizing neural networks could enable people to train models that generate Deepfakes faster.
        \item The authors should consider possible harms that could arise when the technology is being used as intended and functioning correctly, harms that could arise when the technology is being used as intended but gives incorrect results, and harms following from (intentional or unintentional) misuse of the technology.
        \item If there are negative societal impacts, the authors could also discuss possible mitigation strategies (e.g., gated release of models, providing defenses in addition to attacks, mechanisms for monitoring misuse, mechanisms to monitor how a system learns from feedback over time, improving the efficiency and accessibility of ML).
    \end{itemize}
    
\item {\bf Safeguards}
    \item[] Question: Does the paper describe safeguards that have been put in place for responsible release of data or models that have a high risk for misuse (e.g., pretrained language models, image generators, or scraped datasets)?
    \item[] Answer: \answerNA{} 
    \item[] Justification: This paper poses no such risks.
    \item[] Guidelines:
    \begin{itemize}
        \item The answer NA means that the paper poses no such risks.
        \item Released models that have a high risk for misuse or dual-use should be released with necessary safeguards to allow for controlled use of the model, for example by requiring that users adhere to usage guidelines or restrictions to access the model or implementing safety filters. 
        \item Datasets that have been scraped from the Internet could pose safety risks. The authors should describe how they avoided releasing unsafe images.
        \item We recognize that providing effective safeguards is challenging, and many papers do not require this, but we encourage authors to take this into account and make a best faith effort.
    \end{itemize}

\item {\bf Licenses for existing assets}
    \item[] Question: Are the creators or original owners of assets (e.g., code, data, models), used in the paper, properly credited and are the license and terms of use explicitly mentioned and properly respected?
    \item[] Answer: \answerYes{} 
    \item[] Justification: Owners of assets used are properly credited in the paper and any license is respected.
    \item[] Guidelines:
    \begin{itemize}
        \item The answer NA means that the paper does not use existing assets.
        \item The authors should cite the original paper that produced the code package or dataset.
        \item The authors should state which version of the asset is used and, if possible, include a URL.
        \item The name of the license (e.g., CC-BY 4.0) should be included for each asset.
        \item For scraped data from a particular source (e.g., website), the copyright and terms of service of that source should be provided.
        \item If assets are released, the license, copyright information, and terms of use in the package should be provided. For popular datasets, \url{paperswithcode.com/datasets} has curated licenses for some datasets. Their licensing guide can help determine the license of a dataset.
        \item For existing datasets that are re-packaged, both the original license and the license of the derived asset (if it has changed) should be provided.
        \item If this information is not available online, the authors are encouraged to reach out to the asset's creators.
    \end{itemize}

\item {\bf New assets}
    \item[] Question: Are new assets introduced in the paper well documented and is the documentation provided alongside the assets?
    \item[] Answer: \answerNA{} 
    \item[] Justification: The paper does not release new assets
    \item[] Guidelines:
    \begin{itemize}
        \item The answer NA means that the paper does not release new assets.
        \item Researchers should communicate the details of the dataset/code/model as part of their submissions via structured templates. This includes details about training, license, limitations, etc. 
        \item The paper should discuss whether and how consent was obtained from people whose asset is used.
        \item At submission time, remember to anonymize your assets (if applicable). You can either create an anonymized URL or include an anonymized zip file.
    \end{itemize}

\item {\bf Crowdsourcing and research with human subjects}
    \item[] Question: For crowdsourcing experiments and research with human subjects, does the paper include the full text of instructions given to participants and screenshots, if applicable, as well as details about compensation (if any)? 
    \item[] Answer: \answerNA{} 
    \item[] Justification: The paper does not involve crowdsourcing nor research with human subjects
    \item[] Guidelines:
    \begin{itemize}
        \item The answer NA means that the paper does not involve crowdsourcing nor research with human subjects.
        \item Including this information in the supplemental material is fine, but if the main contribution of the paper involves human subjects, then as much detail as possible should be included in the main paper. 
        \item According to the NeurIPS Code of Ethics, workers involved in data collection, curation, or other labor should be paid at least the minimum wage in the country of the data collector. 
    \end{itemize}

\item {\bf Institutional review board (IRB) approvals or equivalent for research with human subjects}
    \item[] Question: Does the paper describe potential risks incurred by study participants, whether such risks were disclosed to the subjects, and whether Institutional Review Board (IRB) approvals (or an equivalent approval/review based on the requirements of your country or institution) were obtained?
    \item[] Answer: \answerNA{} 
    \item[] Justification: The paper does not involve crowdsourcing nor research with human subjects
    \item[] Guidelines:
    \begin{itemize}
        \item The answer NA means that the paper does not involve crowdsourcing nor research with human subjects.
        \item Depending on the country in which research is conducted, IRB approval (or equivalent) may be required for any human subjects research. If you obtained IRB approval, you should clearly state this in the paper. 
        \item We recognize that the procedures for this may vary significantly between institutions and locations, and we expect authors to adhere to the NeurIPS Code of Ethics and the guidelines for their institution. 
        \item For initial submissions, do not include any information that would break anonymity (if applicable), such as the institution conducting the review.
    \end{itemize}

\item {\bf Declaration of LLM usage}
    \item[] Question: Does the paper describe the usage of LLMs if it is an important, original, or non-standard component of the core methods in this research? Note that if the LLM is used only for writing, editing, or formatting purposes and does not impact the core methodology, scientific rigorousness, or originality of the research, declaration is not required.
    \item[] Answer: \answerNA{} 
    \item[] Justification: LLMs are not involved as a core component of the formulation of this research.
    \item[] Guidelines:
    \begin{itemize}
        \item The answer NA means that the core method development in this research does not involve LLMs as any important, original, or non-standard components.
        \item Please refer to our LLM policy (\url{https://neurips.cc/Conferences/2025/LLM}) for what should or should not be described.
    \end{itemize}

\end{enumerate}   

\newpage
\renewcommand{\thesection}{\Alph{section}}
\newpage
\setcounter{section}{0}
\section{Proof}
\label{sec:proof}

Throughout this appendix we work with a \emph{single--token} random variable that
takes values in the augmented vocabulary $\mathcal{V}_{+}:=\mathcal{V}\cup\{\texttt{[MASK]}\}$.  The distinguished absorbing symbol is denoted by $m\in\mathcal{V}_{+}$ and we write $e_{m}$ for the corresponding standard basis vector.

Let $\mathbf{x}_{0}\sim p_{0}$ be drawn from the empirical data distribution, which assigns zero probability to $m$.  Fix a diffusion horizon $T\in\mathbb{N}$.

\subsection{Proof of Proposition~\ref{prop:correspondence}}
\label{app:NM_vs_Markov}

\textit{An absorbing--state non--Markovian discrete diffusion process with marginal transition kernel $\bar{\mathbf{Q}}_{t}^{}=(1-\alpha_{t})\mathbf{I}+\alpha_{t}\mathbf{1}e_{m}^{\top}$ admits a bijection to an absorbing--state Markovian discrete diffusion process with marginal transition kernel $\bar{\mathbf{Q}}_{t}^{*}=(1-\alpha_{t}^{*})\mathbf{I}+\alpha_{t}^{*}\mathbf{1}e_{m}^{\top}$ such that the two processes exhibit identical mutual--information decay between $\mathbf{x}_{t:T}$ and $\mathbf{x}_{0}$, provided that the coefficients satisfy $\alpha_{t}^{*}=\prod_{\tau =t}^{T}\alpha_{\tau}$.}\par\medskip

We begin by formalising the two forward corruption processes that
will be compared.
\subsubsection{Forward–process definitions}
\begin{definition}[Markovian absorbing diffusion]
\label{def:markov}
Let $\boldsymbol{\beta}=(\beta_{1},\dots,\beta_{T})\subset[0,1]^{T}$ be the \emph{single--step masking probabilities}. The Markov chain $(\mathbf{x}_{s})_{s=0}^{T}$ is specified by
\begin{equation}
\label{eq:Markov_forward}
\mathbf{x}_{s}\mid \mathbf{x}_{s-1} \sim (1-\beta_{s})\delta_{\mathbf{x}_{s-1}}+\beta_{s}\delta_{m}, \qquad s=1,\dots,T.
\end{equation}
Because $m$ is absorbing, the cumulative masking probability after $t$ steps is
\begin{equation}
\label{eq:alpha_beta}
\alpha_{t}^{*}:=\Pr\bigl(\mathbf{x}_{t}=m\mid \mathbf{x}_{0}\bigr)=1-\prod_{s=1}^{t}(1-\beta_{s}), \qquad t=1,\dots,T,
\end{equation}
which yields the marginal transition kernel
\begin{equation*}
\bar{\mathbf{Q}}_{t}^{\mathrm{M}}=(1-\alpha_{t}^{*})\mathbf{I}+\alpha_{t}^{*}\mathbf{1}e_{m}^{\top}.
\end{equation*}
\end{definition}

\begin{definition}[Non--Markovian absorbing diffusion]
\label{def:non_markov}
For a \emph{noise schedule} $\boldsymbol{\alpha}=(\alpha_{1},\dots,\alpha_{T})\subset[0,1]^{T}$ we generate the forward trajectory by
\begin{equation}
\label{eq:NM_forward}
\mathbf{x}_{s}\mid \mathbf{x}_{0} \overset{\text{i.i.d.}}{\sim} (1-\alpha_{s})\delta_{\mathbf{x}_{0}}+\alpha_{s}\delta_{m}, \qquad s=1,\dots,T.
\end{equation}
Each $\mathbf{x}_{s}$ is obtained by independently masking $\mathbf{x}_{0}$ with probability $\alpha_{s}$. The associated marginal transition kernel is
\begin{equation*}
\bar{\mathbf{Q}}_{t}^{\mathrm{NM}} = (1-\alpha_{t})\mathbf{I}+\alpha_{t}\mathbf{1}e_{m}^{\top}, \qquad t=1,\dots,T.
\end{equation*}
\end{definition}

\subsubsection{Mutual--information computations and decay measure}

For any pair of random variables $(U,V)$ we recall that $I(U;V) = H(U) - H(U \mid V)$. The entropy $H(\mathbf{x}_0) = -\sum_{v \in \mathcal{V}} p_0(v) \log p_0(v)$ is finite and strictly positive.

To make comparisons between Markovian and non-Markovian diffusion processes more interpretable, we introduce the \textbf{normalized mutual information decay} following~\citep{d3pm}:
\begin{equation}
D_t:=1-\frac{I\left(\mathbf{x}_t ; \mathbf{x}_0\right)}{H\left(\mathbf{x}_0\right)}=\frac{H\left(\mathbf{x}_0, \mathbf{x}_t\right)-H\left(\mathbf{x}_t\right)}{H\left(\mathbf{x}_0\right)}
\end{equation}
which quantifies the proportion of information about $\mathbf{x}_0$ that is lost after corruption step $t$.

\begin{lemma}[Markovian suffix information]
\label{lem:Markov_MI}
For the process of Definition~\ref{def:markov} and every $t\in\{1,\dots,T\}$,
\begin{equation}
I_{\mathrm{M}}\left(\mathbf{x}_{t: T} ; \mathbf{x}_0\right)=H\left(\mathbf{x}_0\right)\left(1-\alpha_t^*\right), \quad D_t^{\mathrm{M}}=\alpha_t^*
\end{equation}
\end{lemma}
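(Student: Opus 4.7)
The plan is to exploit the fact that $m$ is an absorbing state of the Markov chain, which collapses the suffix $\mathbf{x}_{t:T}$ to a rank-one random variable informationally equivalent to $\mathbf{x}_t$ itself. Concretely, I would first argue that for every $s\ge t$, if $\mathbf{x}_t=m$ then $\mathbf{x}_s=m$ almost surely (by \eqref{eq:Markov_forward} and absorption), so conditional on $\mathbf{x}_t=m$ the whole tuple $\mathbf{x}_{t:T}$ is the constant string $(m,\dots,m)$ and carries no information about $\mathbf{x}_0$. Conversely, on $\{\mathbf{x}_t\ne m\}$ the marginal kernel $\bar{\mathbf{Q}}_{t}^{\mathrm{M}} = (1-\alpha_t^{*})\mathbf{I} + \alpha_t^{*}\mathbf{1}e_m^{\top}$ puts all non-mask mass on the diagonal, so $\mathbf{x}_t=\mathbf{x}_0$ deterministically. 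Combining these two cases gives $H(\mathbf{x}_0\mid \mathbf{x}_{t:T}) = H(\mathbf{x}_0\mid \mathbf{x}_t)$, which reduces the whole problem to a one-step entropy computation.

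Next, I would compute $H(\mathbf{x}_0\mid\mathbf{x}_t)$ by conditioning on whether $\mathbf{x}_t=m$. Using $\Pr(\mathbf{x}_t=m)=\alpha_t^{*}$ from \eqref{eq:alpha_beta} together with $H(\mathbf{x}_0\mid\mathbf{x}_t=v)=0$ for $v\ne m$ and $H(\mathbf{x}_0\mid\mathbf{x}_t=m)=H(\mathbf{x}_0)$ (because masking at time $t$ is independent of the identity of $\mathbf{x}_0$ under the symmetric absorbing kernel, so the posterior on $\mathbf{x}_0$ coincides with the prior $p_0$), I obtain
\begin{equation*}
H(\mathbf{x}_0\mid\mathbf{x}_{t:T}) \;=\; \alpha_t^{*}\,H(\mathbf{x}_0).
\end{equation*}
The identity $I_{\mathrm{M}}(\mathbf{x}_{t:T};\mathbf{x}_0)=H(\mathbf{x}_0)(1-\alpha_t^{*})$ then follows immediately from $I=H(\mathbf{x}_0)-H(\mathbf{x}_0\mid\mathbf{x}_{t:T})$, and dividing by $H(\mathbf{x}_0)>0$ yields $D_t^{\mathrm{M}}=\alpha_t^{*}$.

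The main thing to be careful about is the second case computation: one must verify that given $\mathbf{x}_t=m$, the posterior over $\mathbf{x}_0$ really equals the prior $p_0$. This is where the structural form of the absorbing kernel matters — since the probability of being absorbed by time $t$ is the same constant $\alpha_t^{*}$ for every possible value of $\mathbf{x}_0$, Bayes' rule is trivially invariant, and the event $\{\mathbf{x}_t=m\}$ is independent of $\mathbf{x}_0$. Once this independence is recorded, the rest is bookkeeping. The absorption argument collapsing the suffix to $\mathbf{x}_t$ is equally crucial but follows directly from \eqref{eq:Markov_forward}, so I do not anticipate a serious obstacle; the entire proof should fit in a short paragraph.
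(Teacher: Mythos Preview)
Your proposal is correct and essentially coincides with the paper's second proof (the conditional-entropy argument): both reduce the suffix to $\mathbf{x}_t$ via absorption, then split on $\{\mathbf{x}_t=m\}$ versus $\{\mathbf{x}_t\ne m\}$ to get $H(\mathbf{x}_0\mid\mathbf{x}_t)=\alpha_t^{*}H(\mathbf{x}_0)$. The paper additionally offers an alternative proof by directly expanding $I(\mathbf{x}_t;\mathbf{x}_0)=\sum p(u,v)\log\frac{p(u,v)}{p(u)p(v)}$ from the explicit joint pmf, but your route is equivalent and equally complete.
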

Here we give two proofs:
\begin{proof}[Proof 1 (Direct Expansion)]
Because the chain is absorbing, the suffix \(\mathbf{x}_{t:T}\) is a deterministic
function of the single variable \(\mathbf{x}_{t}\); hence
\[
I_{\mathrm{M}}\!\bigl(\mathbf{x}_{t:T};\mathbf{x}_0\bigr)
   \;=\;I\!\bigl(\mathbf{x}_{t};\mathbf{x}_0\bigr).
\]

There are two mutually exclusive outcomes:

\begin{enumerate}[label=(\roman*), leftmargin=2em, labelsep=0.5em]
\item {Un‑masked:}  
      with probability \(1-\alpha_t^{*}\) the token is uncorrupted and
      \(\mathbf{x}_{t}=\mathbf{x}_0\).

\item {Masked:}  
      with probability \(\alpha_t^{*}\) the token is replaced by the absorbing
      symbol \(e_m\), which is independent of \(\mathbf{x}_0\).
\end{enumerate}

Writing \(p_0(v)=\Pr(\mathbf{x}_0=v)\), the joint pmf is
\[
p(\mathbf{x}_0=v,\mathbf{x}_t=u)=
\begin{cases}
(1-\alpha_t^{*})\,p_0(v), & u=v\neq e_m,\\[6pt]
\alpha_t^{*}\,p_0(v),     & u=e_m,\\[6pt]
0,                        & \text{otherwise}.
\end{cases}
\]
The corresponding marginal of \(\mathbf{x}_t\) is
\[
p(\mathbf{x}_t=u)=
\begin{cases}
(1-\alpha_t^{*})\,p_0(u), & u\neq e_m,\\[6pt]
\alpha_t^{*},             & u=e_m.
\end{cases}
\]

{Direct expansion of mutual information.}
\[
I\!\bigl(\mathbf{x}_{t};\mathbf{x}_0\bigr)
   \;=\;\sum_{u,v}p(u,v)\log\frac{p(u,v)}{p(u)\,p(v)}.
\]

Only two cases have non‑zero probability:

\begin{enumerate}[label=(\roman*), leftmargin=2em, labelsep=0.5em]
\item \emph{Case \(u=v\neq e_m\).}\;
      Here \(p(u,v)=(1-\alpha_t^{*})p_0(v)\) and
      \(p(u)=(1-\alpha_t^{*})p_0(v)\), so
      \[
        \log\frac{p(u,v)}{p(u)p(v)}
        =\log\frac{(1-\alpha_t^{*})p_0(v)}{(1-\alpha_t^{*})p_0(v)\,p_0(v)}
        =-\log p_0(v).
      \]
      The total contribution is
      \[
        \sum_{v\neq e_m}(1-\alpha_t^{*})p_0(v)\bigl[-\log p_0(v)\bigr]
        =(1-\alpha_t^{*})H(\mathbf{x}_0).
      \]

\item \emph{Case \(u=e_m\).}\;
      In this case \(p(u,v)=\alpha_t^{*}p_0(v)\) and \(p(u)=\alpha_t^{*}\); the
      logarithmic term vanishes, so the contribution is zero.
\end{enumerate}

Hence
\[
I\!\bigl(\mathbf{x}_{t};\mathbf{x}_0\bigr)=(1-\alpha_t^{*})H(\mathbf{x}_0).
\]

\[
D_t^{\mathrm{M}}
   =1-\frac{I\!\left(\mathbf{x}_{t};\mathbf{x}_0\right)}{H(\mathbf{x}_0)}
   =1-(1-\alpha_t^{*})
   =\alpha_t^{*}.
\]

Since \(I(\mathbf{x}_{t:T};\mathbf{x}_0)=I(\mathbf{x}_{t};\mathbf{x}_0)\), the
stated identities follow.
\end{proof}
\begin{proof}[Proof 2 (Conditional Entropy)]
Condition on the observed token $\mathbf{x}_t$:

\begin{enumerate}[label=(\roman*), leftmargin=2em, labelsep=0.5em]
\item If $\mathbf{x}_t = e_m$ (probability $\alpha_t^{*}$), masking reveals nothing,
      so $H(\mathbf{x}_0 \mid \mathbf{x}_t = e_m)=H(\mathbf{x}_0)$.
\item If $\mathbf{x}_t = v \neq e_m$ (probability $1-\alpha_t^{*}$),
      we know $\mathbf{x}_0=v$ exactly, hence $H(\mathbf{x}_0 \mid \mathbf{x}_t=v)=0$.
\end{enumerate}

Averaging,
\[
H(\mathbf{x}_0\mid\mathbf{x}_t)=
\alpha_t^{*}\,H(\mathbf{x}_0)+(1-\alpha_t^{*})\cdot 0
=\alpha_t^{*}\,H(\mathbf{x}_0).
\]

Therefore
\[
I(\mathbf{x}_t;\mathbf{x}_0)
=H(\mathbf{x}_0)-H(\mathbf{x}_0\mid\mathbf{x}_t)
=(1-\alpha_t^{*})\,H(\mathbf{x}_0),
\qquad
D_t^{\mathrm M}=1-\frac{I(\mathbf{x}_t;\mathbf{x}_0)}{H(\mathbf{x}_0)}
=\alpha_t^{*}.
\]

Because $\mathbf{x}_{t:T}$ is a deterministic function of $\mathbf{x}_t$,
the same formula holds for $I(\mathbf{x}_{t:T};\mathbf{x}_0)$.
\end{proof}

\begin{lemma}[Non--Markovian suffix information]
\label{lem:NM_MI}
For the process of Definition~\ref{def:non_markov} and every $t\in\{1,\dots,T\}$,
\begin{equation}
I_{\mathrm{NM}}\left(\mathbf{x}_{t: T} ; \mathbf{x}_0\right)=H\left(\mathbf{x}_0\right)\left(1-\prod_{\tau=t}^T \alpha_\tau\right), \quad D_t^{\mathrm{NM}}=\prod_{\tau=t}^T \alpha_\tau
\end{equation}
\end{lemma}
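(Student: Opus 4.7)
The plan is to mirror the conditional-entropy argument used in Proof 2 of Lemma~\ref{lem:Markov_MI}, exploiting the one structural feature that distinguishes the non-Markovian process of Definition~\ref{def:non_markov}: conditional on $\mathbf{x}_0$, the variables $\mathbf{x}_t, \mathbf{x}_{t+1}, \ldots, \mathbf{x}_T$ are \emph{independent}, each equalling $\mathbf{x}_0$ with probability $1-\alpha_s$ and the absorbing symbol $m$ with probability $\alpha_s$. From this it will follow that the suffix $\mathbf{x}_{t:T}$ either reveals $\mathbf{x}_0$ exactly (when at least one coordinate is unmasked) or reveals nothing at all (when every coordinate is $m$).

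The first step is to introduce the event
\begin{equation*}
E \;:=\; \bigl\{ \mathbf{x}_s = m \text{ for every } s \in \{t, t+1, \ldots, T\} \bigr\},
\end{equation*}
which is measurable with respect to $\mathbf{x}_{t:T}$. Using conditional independence given $\mathbf{x}_0$, and the fact that the single-coordinate masking probability does not depend on the value of $\mathbf{x}_0$, I would observe that $\Pr(E \mid \mathbf{x}_0) = \prod_{\tau=t}^{T} \alpha_\tau$ for every realisation of $\mathbf{x}_0$. Hence $E$ is independent of $\mathbf{x}_0$ and $\Pr(E) = \prod_{\tau=t}^{T} \alpha_\tau$.

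Next, I would decompose the conditional entropy $H(\mathbf{x}_0 \mid \mathbf{x}_{t:T})$ by conditioning on $E$ and $E^c$. On $E^c$, at least one coordinate of $\mathbf{x}_{t:T}$ equals $\mathbf{x}_0$, so $\mathbf{x}_0$ is a deterministic function of $\mathbf{x}_{t:T}$ and $H(\mathbf{x}_0 \mid \mathbf{x}_{t:T}, E^c) = 0$. On $E$, the suffix is the fixed vector $(m, \ldots, m)$, independent of $\mathbf{x}_0$, so $H(\mathbf{x}_0 \mid \mathbf{x}_{t:T}, E) = H(\mathbf{x}_0)$. Averaging these two cases gives
\begin{equation*}
H(\mathbf{x}_0 \mid \mathbf{x}_{t:T}) = \Pr(E)\,H(\mathbf{x}_0) = H(\mathbf{x}_0)\prod_{\tau=t}^{T}\alpha_\tau,
\end{equation*}
and therefore $I_{\mathrm{NM}}(\mathbf{x}_{t:T}; \mathbf{x}_0) = H(\mathbf{x}_0)\bigl(1 - \prod_{\tau=t}^{T}\alpha_\tau\bigr)$, from which the normalised decay $D_t^{\mathrm{NM}} = \prod_{\tau=t}^{T}\alpha_\tau$ follows by definition.

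There is no real obstacle: the only subtlety is ensuring that the ``uninformative'' case is truly uninformative, i.e., that conditioning on $\mathbf{x}_{t:T}$ inside $E$ is the same as conditioning on $E$ alone, which holds because the suffix is the constant vector $(m,\ldots,m)$ on that event. As a sanity check, the resulting formula lines up exactly with Lemma~\ref{lem:Markov_MI} under the substitution $\alpha_t^{*} = \prod_{\tau=t}^{T}\alpha_\tau$, which is precisely the bijection asserted in Proposition~\ref{prop:correspondence}.
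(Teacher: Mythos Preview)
Your proposal is correct and essentially identical to the paper's own proof: the paper introduces the indicator $Z=\mathbf 1\{\mathbf{x}_\tau=e_m\text{ for all }\tau=t,\dots,T\}$ (your event $E$), notes it is independent of $\mathbf{x}_0$ with $\Pr(Z=1)=\prod_{\tau=t}^{T}\alpha_\tau$, and computes $H(\mathbf{x}_0\mid\mathbf{x}_{t:T})$ by the same two-case split. Your write-up is arguably a bit more careful in justifying why conditioning on $\mathbf{x}_{t:T}$ inside $E$ collapses to conditioning on $E$ alone, but otherwise the arguments coincide.
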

\begin{proof}
Define the indicator
\(Z:=\mathbf 1\{\mathbf{x}_{\tau}=e_m \text{ for every }\tau=t,\dots ,T\}\).
Conditioned on \(\mathbf{x}_0\), each coordinate is masked
independently with probability \(\alpha_\tau\), so
\[
\Pr(Z=1)=\prod_{\tau=t}^{T}\alpha_\tau ,\qquad
\Pr(Z=0)=1-\prod_{\tau=t}^{T}\alpha_\tau .
\]
The event \(Z\) depends only on the masking coins,
hence is independent of the value of \(\mathbf{x}_0\).

\smallskip
\noindent For {conditional entropy \(H(\mathbf{x}_0\mid\mathbf{x}_{t:T})\).}
\begin{enumerate}[label=(\roman*), leftmargin=2em, labelsep=0.5em]
\item If \(Z=1\) (all tokens masked, probability \(\prod\alpha_\tau\)),
      the suffix reveals nothing, so
      \(H(\mathbf{x}_0\mid\mathbf{x}_{t:T})=H(\mathbf{x}_0)\).
\item If \(Z=0\) (at least one un‑masked coordinate, probability
      \(1-\prod\alpha_\tau\)), every un‑masked coordinate equals
      \(\mathbf{x}_0\); thus \(\mathbf{x}_0\) is known exactly and
      \(H(\mathbf{x}_0\mid\mathbf{x}_{t:T})=0\).
\end{enumerate}
Averaging over \(Z\),
\[
H(\mathbf{x}_0\mid\mathbf{x}_{t:T})
      =\Bigl(\prod_{\tau=t}^{T}\alpha_\tau\Bigr)H(\mathbf{x}_0).
\]

\smallskip
Using \(I(U;V)=H(U)-H(U\mid V)\),
\[
I_{\mathrm{NM}}\!\bigl(\mathbf{x}_{t:T};\mathbf{x}_0\bigr)
   =H(\mathbf{x}_0)-H(\mathbf{x}_0\mid\mathbf{x}_{t:T})
   =H(\mathbf{x}_0)\Bigl(1-\prod_{\tau=t}^{T}\alpha_\tau\Bigr).
\]

\smallskip
\[
D_t^{\mathrm{NM}}
   =1-\frac{I_{\mathrm{NM}}\!\left(\mathbf{x}_{t:T};\mathbf{x}_0\right)}
           {H(\mathbf{x}_0)}
   =\prod_{\tau=t}^{T}\alpha_\tau.
\]
\end{proof}

These results show that both the Markovian and non-Markovian processes exhibit the same normalized mutual information decay if their cumulative masking curves satisfy:
\begin{equation}
\alpha_t^*=\prod_{\tau=t}^T \alpha_\tau.
\end{equation}

Since \(\alpha^*_t\) and \(\beta_t\) have correspondence as shown in Equation~\ref{eq:alpha_beta}, one can easily further derive:
\begin{proposition}[Equivalence in mutual-information decay]
\label{prop:equivalence}
Let $\boldsymbol{\alpha} \subset [0,1]^T$ be a non--Markovian schedule and define the effective cumulative schedule \(\alpha_t^* := \prod_{\tau=t}^T \alpha_\tau\). Then there exists a Markovian schedule \(\boldsymbol{\beta} \subset [0,1]^T\) defined by
\begin{equation}
\beta_t := 1 - \frac{1 - \alpha_t^*}{1 - \alpha_{t-1}^*}, \qquad \alpha_0^* := 0,
\end{equation}
such that:
\begin{equation}
D_t^{\mathrm{NM}} = D_t^{\mathrm{M}} \quad \text{and} \quad I_{\mathrm{NM}}\left(\mathbf{x}_{t: T} ; \mathbf{x}_0\right) = I_{\mathrm{M}}\left(\mathbf{x}_{t: T} ; \mathbf{x}_0\right),
\end{equation}
for all \(t = 1, \dots, T\). This ensures equivalence in both absolute and relative information loss.
\end{proposition}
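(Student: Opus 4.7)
The plan is to show that the mapping $\boldsymbol{\alpha}\mapsto\boldsymbol{\beta}$ defined in the proposition is a well-posed schedule transformation that makes the two cumulative masking profiles coincide pointwise, after which the conclusion reduces to invoking Lemmas~\ref{lem:Markov_MI} and \ref{lem:NM_MI} and the normalization $I(\mathbf{x}_{t:T};\mathbf{x}_0)=H(\mathbf{x}_0)(1-D_t)$.

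First I would verify feasibility: the proposed $\beta_t$ lies in $[0,1]$. This follows from the telescoping identity $\alpha_{t-1}^{*}=\alpha_{t-1}\,\alpha_{t}^{*}$ implied by the definition $\alpha_t^{*}=\prod_{\tau=t}^{T}\alpha_\tau$. Since each $\alpha_\tau\in[0,1]$, the sequence $\{\alpha_t^{*}\}_{t=0}^{T}$ is monotonically non-decreasing in $t$ (with the convention $\alpha_0^{*}=0$), so the ratio $(1-\alpha_t^{*})/(1-\alpha_{t-1}^{*})$ lies in $[0,1]$ whenever the denominator is positive, which guarantees $\beta_t\in[0,1]$.

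Next I would establish the key identity $1-\prod_{s=1}^{t}(1-\beta_s)=\alpha_t^{*}$ by induction on $t$. The base case $t=1$ is immediate from $\alpha_0^{*}=0$ and the defining formula for $\beta_1$. For the inductive step, the definition $1-\beta_t=(1-\alpha_t^{*})/(1-\alpha_{t-1}^{*})$ makes the product collapse: $\prod_{s=1}^{t}(1-\beta_s)=(1-\alpha_{t-1}^{*})\cdot(1-\alpha_t^{*})/(1-\alpha_{t-1}^{*})=1-\alpha_t^{*}$. Matching this against Equation~\eqref{eq:alpha_beta} shows that the cumulative masking probability of the constructed Markov chain equals $\alpha_t^{*}$.

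With this identity in hand, the conclusion is mechanical. Lemma~\ref{lem:Markov_MI} gives $D_t^{\mathrm{M}}=\alpha_t^{*}=\prod_{\tau=t}^{T}\alpha_\tau$, which by Lemma~\ref{lem:NM_MI} equals $D_t^{\mathrm{NM}}$; the mutual-information equality then follows by multiplying by the common factor $H(\mathbf{x}_0)$. The only real obstacle is the edge case $\alpha_{t-1}^{*}=1$, which produces the indeterminate form $0/0$ in the definition of $\beta_t$. I would handle this by imposing a mild non-degeneracy condition ($\alpha_\tau<1$ for $\tau<T$, consistent with the paper's assumption $\alpha_T\approx 1$) or by adopting the convention $\beta_t=1$ once full absorption has been reached, neither of which alters the stated equivalence since both processes are already in their absorbing state.
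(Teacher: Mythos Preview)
Your proposal is correct and follows the same route as the paper: use Lemmas~\ref{lem:Markov_MI} and~\ref{lem:NM_MI} to reduce the claim to matching the cumulative masking curves, then invoke the correspondence in Equation~\eqref{eq:alpha_beta}. The paper merely remarks that ``one can easily further derive'' the proposition from this correspondence, whereas you spell out the feasibility check, the telescoping induction, and the degenerate $\alpha_{t-1}^{*}=1$ edge case---all appropriate details that the paper leaves implicit.
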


\paragraph{Matching the linear marginal used in prior work.}
Most existing studies on absorbing‑state Markovian diffusions~\cite{d3pm, lou2024discretediffusionmodelingestimating, mdlm} adopt the \emph{linear} cumulative–masking
curve
\(
  \alpha_{t}^{\mathrm{lin},*}=t/T,\; t=0,\dots,T.
\)
For a fair comparison by default we employ a non‑Markovian \emph{independent} schedule
${\alpha}^{\mathrm{lin}}$ that reproduces exactly the same
marginals in our experiments, i.e.\ satisfies
\(
  \alpha_{t}^{\mathrm{lin},*}
  =\prod_{\tau=t}^{T}\alpha_{\tau}^{\mathrm{lin}}.
\)
Using the backward recursion
$\alpha_{t}= \alpha_{t}^{*}/\alpha_{t+1}^{*}$ (Sec.~\ref{app:NM_vs_Markov})
gives the closed‑form
\[
  {\;
    \alpha_{t}^{\mathrm{lin}}
    =\frac{t}{t+1},\quad t=1,\dots,T-1,\qquad
    \alpha_{T}^{\mathrm{lin}}=1
  \;}
\]

\subsection{Derivation of the Non-Markovian Evidence Lower Bound (ELBO)}
\label{app:elbo_proof}
We derive a variational lower bound on the marginal log-likelihood of the observed data $\mathbf{x}_0$ under the non-Markovian discrete diffusion model.


We start with the marginal likelihood of the observed data:

$$
\log p_\theta\left(\mathbf{x}_0\right)=\log \int p_\theta\left(\mathbf{x}_0, \mathbf{x}_{1: T}\right) d \mathbf{x}_{1: T}=\log \int \frac{p_\theta\left(\mathbf{x}_0, \mathbf{x}_{1: T}\right)}{q\left(\mathbf{x}_{1: T} \mid \mathbf{x}_0\right)} q\left(\mathbf{x}_{1: T} \mid \mathbf{x}_0\right) d \mathbf{x}_{1: T}
$$

Applying Jensen's inequality yields the Evidence Lower Bound (ELBO):

$$
\log p_\theta\left(\mathbf{x}_0\right) \geq \mathbb{E}_{q\left(\mathbf{x}_{1: T} \mid \mathbf{x}_0\right)}\left[\log \frac{p_\theta\left(\mathbf{x}_0, \mathbf{x}_{1: T}\right)}{q\left(\mathbf{x}_{1: T} \mid \mathbf{x}_0\right)}\right]=\mathcal{L}_{\text {non-markov }}
$$


In the non-Markovian case, the joint distribution over the reverse generative process is:

$$
p_\theta\left(\mathbf{x}_0, \mathbf{x}_{1: T}\right)=p_\theta\left(\mathbf{x}_T\right) \prod_{t=1}^T p_\theta\left(\mathbf{x}_{t-1} \mid \mathbf{x}_{t: T}\right)
$$

where $\mathbf{x}_0$ is generated at the final step.
The approximate posterior, due to the independent corruption assumption, factorizes as:

$$
q\left(\mathbf{x}_{1: T} \mid \mathbf{x}_0\right)=\prod_{t=1}^T q\left(\mathbf{x}_t \mid \mathbf{x}_0\right)
$$

Finally, we have the ELBO expansion:
\begin{equation}
\begin{aligned}
\mathcal{L}_{\text{non-markov}} =\; &
  \mathbb{E}_{q(\mathbf{x}_{1:T} \mid \mathbf{x}_0)} \Bigg[
     \log \frac{p_\theta(\mathbf{x}_T)}{q(\mathbf{x}_T \mid \mathbf{x}_0)}
     + \sum_{t=2}^{T}
         \log \frac{p_\theta(\mathbf{x}_{t-1} \mid \mathbf{x}_{t:T})}
                    {q(\mathbf{x}_{t-1} \mid \mathbf{x}_0)}
     + \log p_\theta(\mathbf{x}_0 \mid \mathbf{x}_{1:T})
  \Bigg] \\
=\; & \underbrace{
       \mathbb{E}_{q(\mathbf{x}_{1:T} \mid \mathbf{x}_0)}
       \log p_\theta(\mathbf{x}_0 \mid \mathbf{x}_{1:T})
     }_{\text{Reconstruction}} 
     - \underbrace{
       \operatorname{KL}\left(q(\mathbf{x}_T \mid \mathbf{x}_0) \;\|\; p_\theta(\mathbf{x}_T)\right)
     }_{\text{Prior KL}} \\
  & - \underbrace{
       \sum_{t=2}^{T}
       \mathbb{E}_{q(\mathbf{x}_{t:T} \mid \mathbf{x}_0)}
       \operatorname{KL}\left(q(\mathbf{x}_{t-1} \mid \mathbf{x}_0) \;\|\; p_\theta(\mathbf{x}_{t-1} \mid \mathbf{x}_{t:T})\right)
     }_{\text{Reverse KLs}}
\end{aligned}
\label{eq:nonmarkov-elbo}
\end{equation}
We can denote the accumulated reverse KL terms as:
\begin{equation}
    \mathcal{L}_T=\sum_{t=2}^T \mathbb{E}_{q\left(\mathbf{x}_{t: T} \mid \mathbf{x}_0\right)} \operatorname{KL}\left(q\left(\mathbf{x}_{t-1} \mid \mathbf{x}_0\right) \| p_\theta\left(\mathbf{x}_{t-1} \mid \mathbf{x}_{t: T}\right)\right.
\end{equation}

With this notation, the non-Markovian ELBO simplifies to Equation~\ref{eq:elbo_loss}:
\begin{equation}
    \mathcal{L}_{\text {non-markov }}=\mathbb{E}_{q\left(\mathbf{x}_{1: T} \mid \mathbf{x}_0\right)} \log p_\theta\left(\mathbf{x}_0 \mid \mathbf{x}_{1: T}\right)-\operatorname{KL}\left(q\left(\mathbf{x}_T \mid \mathbf{x}_0\right) \| p_\theta\left(\mathbf{x}_T\right)\right)-\mathcal{L}_T
\end{equation}
Note that the second term, corresponding to the prior KL, is constant for most diffusion kernels and can be omitted during training.

\subsection{Proof of Proposition \ref{prop:objective}}
\textit{
Suppose the non-Markovian diffusion process adopt an absorbing marginal kernel \(q\left(\boldsymbol{x}_t \mid \boldsymbol{x}_0\right) = \operatorname{Cat}\left(\mathbf{x}_t ; \mathbf{x}_0 \bar{\mathbf{Q}}_t\right)\), where \(\bar{\mathbf{Q}}_t = \left(1-\alpha_t\right) \mathbf{I}+\alpha_t \mathbf{1} e_m^{\top}\) and \(\alpha_t\) is a increasing function with \(\alpha_0 \approx 0\) and \(\alpha_T \approx 1\). The ELBO loss in Equation~\eqref{eq:elbo_loss} can be further simplified to:
\begin{equation*}
    \mathcal{L}_{\text{absorb}} = 
    \mathbb{E}_{\mathbf{x}_{1: T} \sim q\bigl( \mathbf{x}_{1:T} \mid \mathbf{x}_0\bigr)}
    \sum_{t=1}^T
    \bigl[
    \alpha_{t-1} \mathbf{x}_0^{\top}\log \mu_\theta(\mathbf{x}_{t:T}, t)
    \bigr].
\end{equation*}
}
\label{app:absorb_elbo_proof}
\paragraph{Proof.} Suppose the non-Markovian diffusion process adopts an absorbing marginal kernel \(\overline{\mathbf{Q}}_t=\left(1-\alpha_t\right) \mathbf{I}+\alpha_t \mathbf{1} e_m^{\top}\), we have
\begin{equation}
    q\left(\mathbf{x}_t | \mathbf{x}_0 \right) 
    = 
    \begin{cases}
    \alpha_t & \mathbf{x}_t=e_m \\
    1 - \alpha_t & \mathbf{x}_t = \mathbf{x}_0 \\
    0 & \text{otherwise}.
    \end{cases}
\end{equation}

We now evaluate the per-step KL term in the ELBO~\ref{eq:nonmarkov-elbo}:
\begin{equation}
\begin{aligned}
\text{KL}\left(q(\mathbf{x}_{t-1} \mid \mathbf{x}_0) \,\|\, p_\theta(\mathbf{x}_{t-1} \mid \mathbf{x}_{t:T})\right)
&= \text{KL}\left(q(\mathbf{x}_{t-1} \mid \mathbf{x}_0) \,\|\, q(\mathbf{x}_{t-1} \mid \mathbf{x}_{t+1}, \mu_\theta(\mathbf{x}_{t:T}, t))\right) \\
&= \underbrace{q(\mathbf{x}_{t-1} = e_m \mid \mathbf{x}_0) \log \frac{q(\mathbf{x}_{t-1} = e_m \mid \mathbf{x}_0)}{q(\mathbf{x}_{t-1} = e_m \mid \mu_\theta(\mathbf{x}_{t:T}, t))}}_{0} \\
&\quad + \sum_{k \ne m} q(\mathbf{x}_{t-1} = e_k \mid \mathbf{x}_0) \log \frac{q(\mathbf{x}_{t-1} = e_k \mid \mathbf{x}_0)}{q(\mathbf{x}_{t-1} = e_k \mid \mu_\theta(\mathbf{x}_{t:T}, t))} \\
&= \sum_{k \ne m} \alpha_{t-1} \mathbf{x}_0^\top e_k \log \frac{\mathbf{x}_0^\top e_k}{\left[\mu_\theta(\mathbf{x}_{t:T}, t)\right]^\top e_k} \\
&= -\alpha_{t-1} \mathbf{x}_0^\top \log \mu_\theta(\mathbf{x}_{t:T}, t)
\end{aligned}
\label{sq:per_step_loss}
\end{equation}

Next, consider the reconstruction term:
\begin{equation}
\begin{aligned}
        \mathbb{E}_{q\left(\mathbf{x}_{1: T} \mid \mathbf{x}_0\right)} \log p_\theta\left(\mathbf{x}_0 \mid \mathbf{x}_{1: T}\right) &= \mathbb{E}_{q\left(\mathbf{x}_{1: T} \mid \mathbf{x}_0\right)}  \mathbf{x}_0^\top \log \mu_\theta(\mathbf{x}_{1:T}, 1) \\
        &= \mathbb{E}_{q\left(\mathbf{x}_{1: T} \mid \mathbf{x}_0\right)} \alpha_0 \mathbf{x}_0^\top \log \mu_\theta(\mathbf{x}_{1:T}, 1)
\end{aligned}
\end{equation}

Finally, the prior KL term vanishes:
\begin{equation}
    \operatorname{KL}\left(q\left(\mathbf{x}_T \mid \mathbf{x}_0\right) \| p\left(\mathbf{x}_T\right)\right)=\operatorname{KL}\left(\delta_{\mathbf{x}_T, e_m} \| \delta_{\mathbf{x}_T, e_m}\right)=0
\end{equation}
Putting all components together, we obtain the full ELBO:
\begin{equation}
\mathcal{L}_{\text {absorb }}=\mathbb{E}_{\mathbf{x}_{1: T} \sim q\left(\mathbf{x}_{1: T} \mid \mathbf{x}_0\right)} \sum_{t=1}^T\left[\alpha_{t-1} \mathbf{x}_0^{\top} \log \mu_\theta\left(\mathbf{x}_{t: T}, t\right)\right]
\end{equation}

\subsection{Failure to Remask Issue}
In Markovian discrete diffusion models with an absorbing kernel defined as $\mathbf{Q}_t=\left(1-\beta_t\right) \mathbf{I}+\beta_t \mathbf{1} e_m^{\top}$, the corresponding marginal kernel is given by $\overline{\mathbf{Q}}_t=\left(1-\alpha_t^*\right) \mathbf{I}+\alpha_t^* \mathbf{1} e_m^{\top}$, where $\alpha_t^*=1-\prod_{s=1}^t\left(1-\beta_s\right)$, Using the closed-form posterior in Equation~\ref{eq:markov_posterior}, we obtain:

\begin{equation}
q\left(\mathbf{x}_{t-1} \mid \mathbf{x}_t, \mathbf{x}_0=\mu_\theta(\mathbf{x}_t)\right)= \begin{cases}\operatorname{Cat}\left(\mathbf{x}_{t-1} ; \mathbf{x}_t\right) & \mathbf{x}_t \neq e_m \\
\operatorname{Cat}\left(\mathbf{x}_{t-1} ; \frac{\alpha_{t-1}^* e_m+\left(\alpha_{t}^*-\alpha_{t-1}^*\right) \mu_\theta(\mathbf{x}_t)}{\alpha_{t}^*}\right) & \mathbf{x}_t=e_m\end{cases}
\end{equation}

This expression highlights an issue: once a token is unmasked (i.e., $\mathbf{x}_t \neq e_m$ ), it is deterministically copied to $\mathbf{x}_{t-1}$, regardless of the model prediction $\mu_\theta\left(\mathbf{x}_t\right)$. As a result, the model is unable to revise early mistakes--an issue we refer to as \textbf{failure to remask}.

Non-Markovian discrete diffusion formulation defines the posterior as $q\left(\mathbf{x}_{t-1} \mid \mu_\theta\left(\mathbf{x}_{t: T}, t\right)\right)$, removing the Markovian constraint and allowing more flexible transitions. By appropriately integrating inductiva bias into the design of \(\mu_\theta\)(e.g. latent truncation as described in Section~\ref{sec:latent_trunc}), the model is able to revisit and potentially forget earlier errors during generation.
\section{Related Work}

\paragraph{Discrete Diffusion and Discrete Flow Matching.}
Diffusion models \citep{ho2020denoisingdiffusionprobabilisticmodels} generate data by learning a reverse (denoising) process to invert a fixed forward (noising) Markov chain. \citet{d3pm} first extended such models to discrete data (D3PM) by defining uniform and absorbing diffusion kernels on finite state spaces. Subsequent work introduced improved parameterizations, such as data distribution ratio estimation \citep{lou2024discretediffusionmodelingestimating}, drawing parallels with score matching \citep{song2021scorebasedgenerativemodelingstochastic}. 
Despite their efficacy, these methods typically rely on a Markov chain, focusing on denoising from a single noisy state \(\mathbf{x}_t\). By contrast, our approach \textbf{breaks} the Markovian assumption and conditions on the entire future trajectory \(\mathbf{x}_{t:T}\), providing more robust denoising and broader generative capabilities.

Flow matching \citep{lipman2023flowmatchinggenerativemodeling, tong2024improvinggeneralizingflowbasedgenerative} learns a continuous transformation from noise to data via an ODE governed by a vector field. Recent extensions handle discrete data \citep{gat2024discreteflowmatching, davis2024fisherflowmatchinggenerative, stark2024dirichletflowmatchingapplications}. While these methods circumvent explicit Markovian noising, they often require continuous flow formulations and specialized training objectives. In contrast, our {non-Markovian discrete diffusion} remains within the discrete diffusion paradigm, retains a straightforward variational objective, and integrates naturally with causal modeling.

\paragraph{Autoregressive Models.}
Autoregressive Transformers \citep{vaswani2017attention, chowdhery2023palm, touvron2023llama} have become foundational in language modeling, producing tokens sequentially conditioned on preceding context. While highly effective for unidirectional, left-to-right generation, they often struggle with tasks requiring intermediate modification or bidirectional reasoning. Within our framework, CaDDi-AR represents a specialized variant that integrates causal (autoregressive) decoding with diffusion-based iterative denoising. This hybrid design enables CaDDi-AR to combine the strengths of both paradigms—efficient left-to-right token generation and flexible multi-step refinement.

\paragraph{Integrating Autoregression with Diffusion and Flow Matching.}
Several works~\citep{DART, he2024calmflowvolterraflowmatching} try to combine diffusion or flow matching with causal transformers for improved generation. Specifically, DART \citep{DART} employs a non-Markovian trajectory to let a transformer model entire sequences of diffusion states. Our approach {further refines} this idea in two ways: (\textit{i}) we focus on discrete non-Markovian diffusion with explicit multi-step conditioning, and (\textit{ii}) we provide a direct path for adapting \textit{pretrained} LLMs, thus combining the strengths of large-scale language model pretraining with the controllability of discrete diffusion.

\paragraph{Non-Markovian Reverse Process in Physical System.}
Using a Non-Markov reverse process to recover the distribution introduced by Markovian forward process is not a new idea. In physics, many systems exhibit this property. \textit{Langevin Dynamics:} Although the forward motion of a Brownian particle (with velocity and position) can be Markovian in the full state space, attempts to reverse the position-only dynamics often require the history of the system to account for friction or random kicks \citep{gardinerstochastic, van1992stochastic}. \textit{Quantum Processes:} Tracing out environmental degrees of freedom can yield a Markovian forward evolution, but reconstructing the entire global state upon reversal introduces non-Markovian memory effects.

\paragraph{Non-Markovian Discrete Diffusion.}
Relaxing the Markovian assumption enables a more flexible and expressive diffusion framework. Prior work, DNDM~\citep{chen2024fast}, focuses on accelerating inference by introducing a predetermined transition time set, enabling a training-free sampling algorithm that significantly reduces the number of function evaluations. This efficiency gain is achieved by breaking the Markovian constraint. Concurrently, ReMDM~\citep{wang2025remasking} proposes a non-Markovian discrete diffusion process that allows remasking of previously generated tokens during inference, enabling iterative refinement. In contrast, CaDDi introduces a more general non-Markovian discrete diffusion framework that explicitly models the entire generation trajectory. 



\setcounter{section}{2}
\section{Model Implementation Details}
\label{sec:implement}
\subsection{Context Window and Latent Compression}
\label{sec:latent_compression}
A practical limitation when using causal language models within our non-Markovian discrete diffusion framework arises from their inherently bounded context windows. Traditional causal transformers can process sequential inputs effectively only up to a fixed length, which restricts the number of latent timesteps that can be accommodated-denoted here as $m$. However, non-Markovian diffusion processes often require conditioning on extensive historical trajectories, frequently exceeding this limit.

To address this constraint, we introduce a general latent compression operator, $\Gamma\left(\mathbf{x}_{1: T}\right)$, which maps the full latent sequence $\mathbf{x}_{1: T}$ to a compressed form that fits within the model's context window. Accordingly, the reverse denoising function is expressed as:
\begin{equation}
\boldsymbol{\mu}_\theta\left(\mathbf{x}_{t:T}, t\right) := \boldsymbol{\mu}_\theta\left(\Gamma\left(\mathbf{x}_{t:T}\right), t\right)
\end{equation}

\subsubsection{Latent Truncation}
\label{sec:latent_trunc}
The simplest yet effective instantiation of $\Gamma$ is latent truncation, defined as
\begin{equation}
\Gamma_{\text {trunc }}\left(\mathbf{x}_{t: T}\right):=\operatorname{FlattenTime}\left(\mathbf{x}_t, \mathbf{x}_{t+1}, \ldots, \mathbf{x}_{t+m-1}\right),
\end{equation}
where \(\operatorname{FlattenTime}\) denotes the operation of temporally flattening the sequence into a one-dimensional input trajectory suitable for the causal language model.
For a model with a context window limit of \(m\), this strategy retains only a fixed-length segment from the most recent \(m\) timesteps, discarding earlier latents.

Such selective truncation inherently emphasizes the most recent—and typically more informative—states, which are often less affected by accumulated inference noise. Notably, by intentionally discarding earlier latents, the model is allowed to “forget” potentially noisy or erroneous history, an idea related to selective re-masking approaches in diffusion modeling~\citep{wang2025remasking}. This forgetfulness acts as an implicit regularization mechanism, helping the model focus on more stable and relevant information while reducing the risk of propagating stale or corrupted context during inference.

\subsubsection{Trajectory Re-composition}
\label{sec:latent_compose}
While latent truncation is efficient, it can discard valuable long-range information beyond timestep $t+m$. To mitigate this, we introduce trajectory re-composition, a complementary compression strategy that integrates information across the full latent sequence before applying truncation. This method first aggregates latent information through a sequential integration operation and then retains only the most recent $m$ composite states:
\begin{equation}
\begin{aligned}
    \mathbf{x}_t^{\curvearrowright} &:= \mathbf{x}_t \oplus \mathbf{x}_{t+1} \oplus \cdots \oplus \mathbf{x}_T\\
    \Gamma_{\mathrm{rec}}\left(\mathbf{x}_{1: T}\right)&:=\operatorname{FlattenTime}\left(\mathbf{x}_t^{\curvearrowright}, \mathbf{x}_{t+1}^{\curvearrowright}, \ldots, \mathbf{x}_{t+m-1}^{\curvearrowright}\right)
\end{aligned}
\end{equation}
Here, $\oplus$ denotes an element-wise integration operator. For masked-like diffusion models, this operator replaces each element with the most recently unmasked token when applicable. This re-composition process enables earlier timesteps to incorporate contextual signals from future states, effectively compressing the trajectory into a form suitable for the limited context window without losing essential long-range dependencies.

\subsection{Causal Bidirectional
Augmentation}
\label{sec:causal_bidrectional}

\begin{figure}[t]
    \centering
    \begin{subfigure}[c]{0.3\textwidth}
    \centering
        \includegraphics[width=\textwidth]{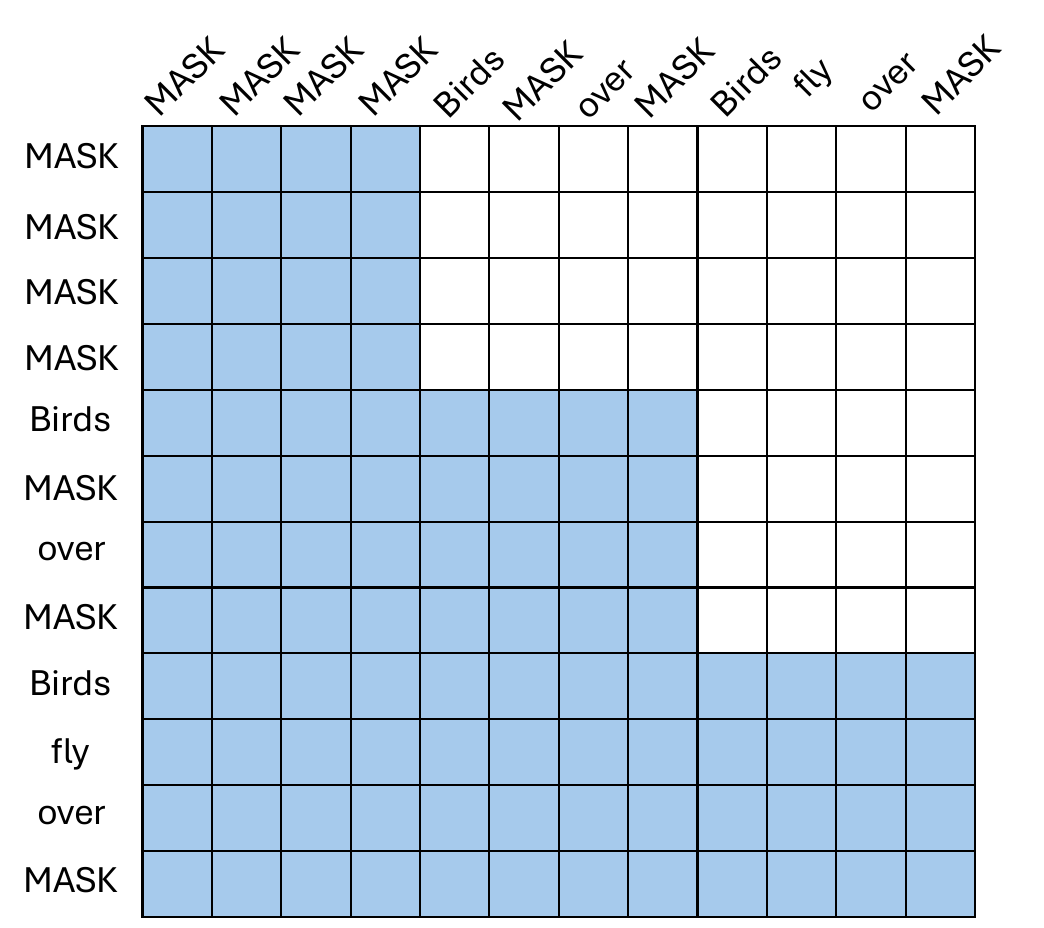}
        \caption{Block-level causal attention mask}
        \label{fig:caddi_block_mask}
    \end{subfigure}
    \hfill
    \begin{subfigure}[c]{0.65\textwidth}
    \centering
        \includegraphics[width=\textwidth]{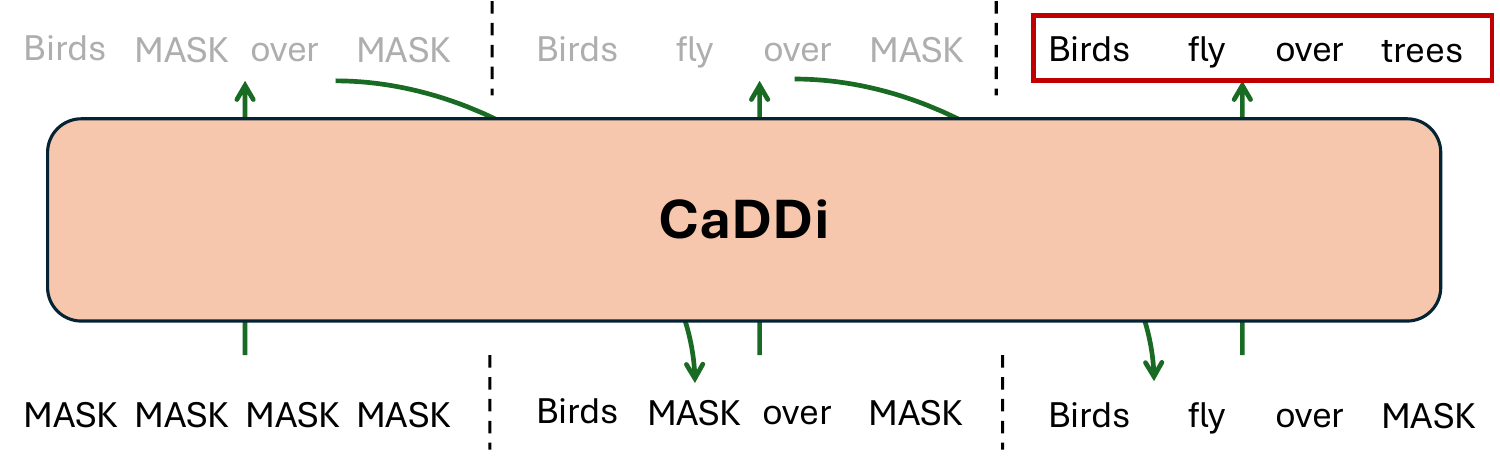}
        \caption{CaDDi generation with block-level causal mask}
        \label{fig:caddi_block_generation}
    \end{subfigure}
    \caption{Illustration of vanilla block-wise generation of CaDDi. Figure~\ref{fig:caddi_block_mask} shows the attention mask of vanilla block-wise generation. The block-level causal mask allows bidirectional attention within each time point and causal attention over the time points. Figure ~\ref{fig:caddi_block_generation} shows the generation scheme. Note that the model itself predicts the clean data $x_0$ in practice but the figure highlights the next sampled time point (in color) for clarity. }
    \label{fig:caddi_block}
\end{figure}
Conventional discrete diffusion models often employ transformers with bidirectional attention across the sequence dimension. In the context of non-Markovian discrete diffusion, a natural extension of this framework is to incorporate a block-wise causal attention mask, as illustrated in Figure~\ref{fig:caddi_block}. This block-level causal mask enables bidirectional attention within a single timepoint and enforces causal dependencies across different timepoints. 

However, unlike token-level causal attention (CaDDi-AR), block-wise causal attention of CaDDi is not readily compatible with standard pretrained causal language models, which adopt token-level causal mask and next-token prediction scheme. Moreover, as it involves an irregular attention pattern, it cannot take advantage of existing infrastructure—such as efficient implementations like FlashAttention \cite{dao2022flashattention}.

For CaDDi, our goal is to enable one-shot block-level generation while preserving bidirectional modeling capabilities. Directly applying a token-level causal mask to predict the next block in one shot would eliminate bidirectional modeling, as the final timepoint’s tokens in the context window would lack visibility into subsequent tokens (see Figure~\ref{fig:bidirectional_attn}). So here we propose an alternative - \textit{causal bidirectional augmentation}, which involves repeating the final timepoint in the context window. As shown in Figure~\ref{fig:caddi_bidir_generation}, this approach preserves compatibility with the standard token-level causal mask and supports block-wise generation with bidirectional modeling. Our method also bears resemblance to prior work~\citep{springer2024repetitionimproveslanguagemodel} on repetition-based improvements in language modeling.

\begin{figure}[t]
    \centering
        \includegraphics[width=0.6\textwidth]{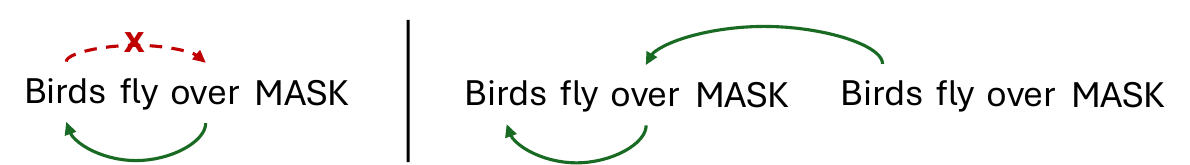}
    \caption{Illustration of causal bidirectional
augmentation. On the left, direct token-level causal attention is applied to the original sequence, where each token attends only to preceding tokens. On the right, the sequence is repeated, allowing tokens in the second copy to attend to their counterparts in the first, thereby approximating bidirectional attention with token-level causal attention.}
    \label{fig:bidirectional_attn}
\end{figure}

\begin{figure}[t]
    \centering
    \begin{subfigure}[c]{0.3\textwidth}
    \centering
        \includegraphics[width=\textwidth]{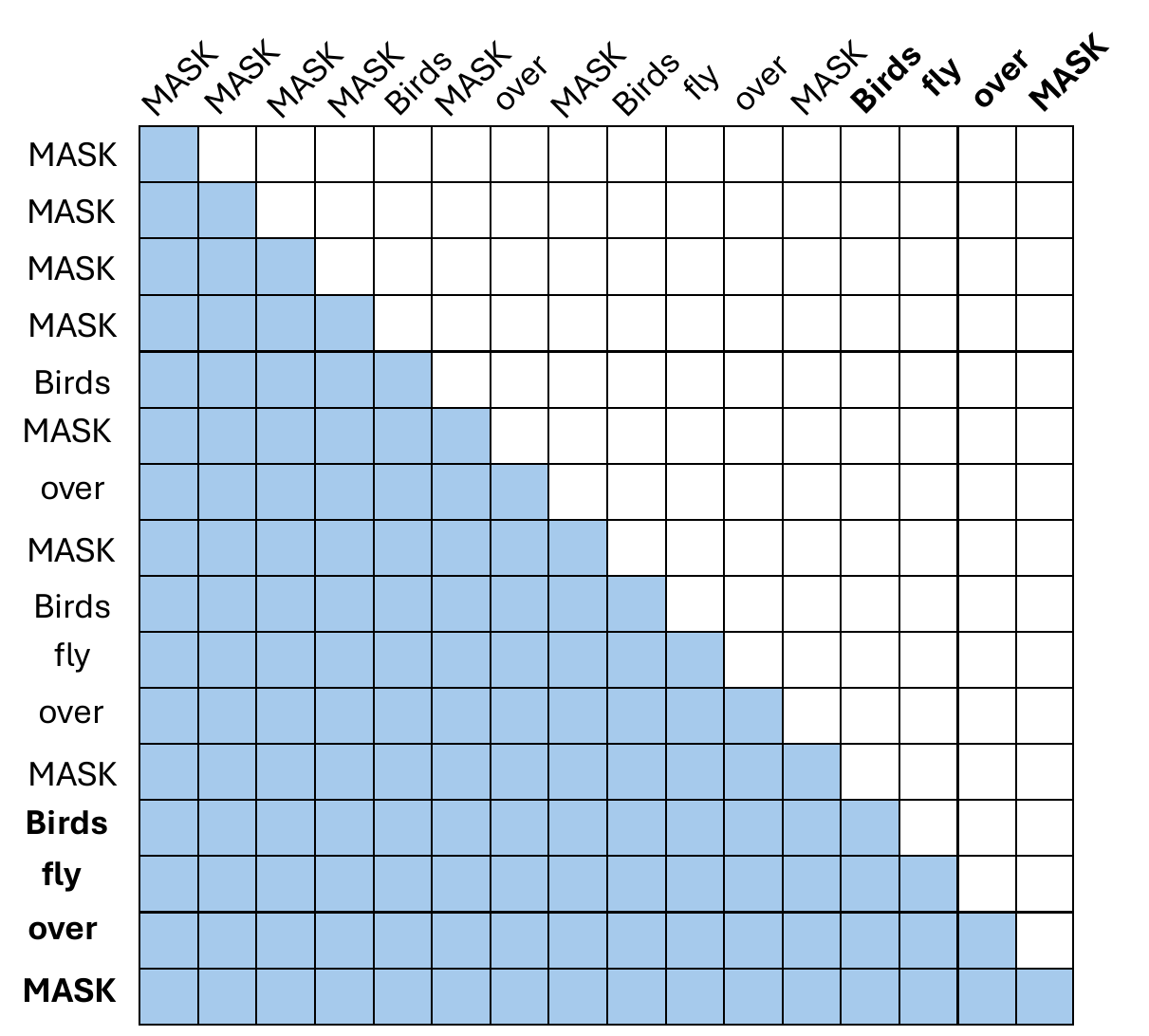}
        \caption{Token Level causal attention mask}
        \label{fig:caddi_bidir_mask}
    \end{subfigure}
    \hfill
    \begin{subfigure}[c]{0.65\textwidth}
    \centering
        \includegraphics[width=\textwidth]{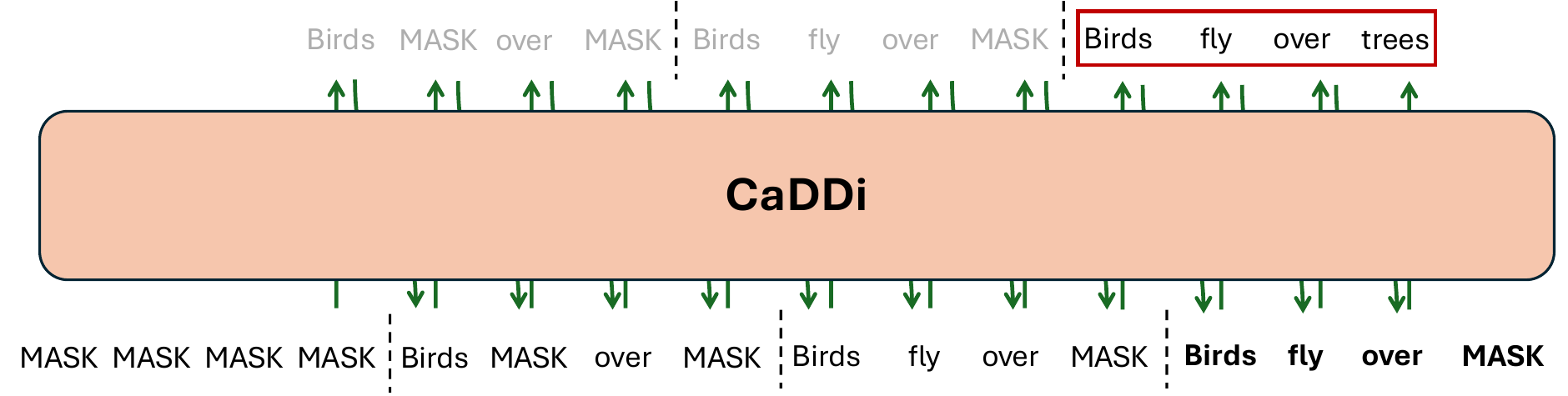}
        \caption{CaDDi Generation with causal bidirectional augmentation}
        \label{fig:caddi_bidir_generation}
    \end{subfigure}
    \caption{Illustration of causal bidirectional
augmentation. The last time point is repeated and highlighted in \textbf{bold} to approximate bidirectional attention within a causal framework. Figure~\ref{fig:caddi_bidir_mask} shows the token-level causal attention mask, identical to the standard mask used in causal language models. Figure~\ref{fig:caddi_bidir_generation} illustrates the generation process: for the token at position \(i\), the model attends to all tokens in the generative trajectory and predicts the token at position \((i+1)\) in \(\mathbf{x}_0\), consistent with the autoregressive nature of causal language models. Both the attention mask and generation process are fully compatible with causal architectures, making CaDDi a natural extension. Note that, while the model predicts the clean data $x_0$ in practice; the next time point is shown gray here for illustrative simplicity.}
    \label{fig:caddi_bidir}
\end{figure}

\subsection{CaDDi-AR Token-Level Factorization}
As discussed in Section~\ref{sec:caddi_ar}, CaDDi-AR models token-level dependencies within each denoising step by adopting an autoregressive factorization over the token dimension. This approach builds on prior work such as DART~\citep{DART}, which proposed token-wise autoregression for improving expressiveness in discrete diffusion.

Formally, instead of modeling the full sequence distribution in a single step, we decompose the reverse process as:

\begin{equation}
p_\theta\left(\mathbf{x}_{t-1} \mid \mathbf{x}_{t: T}\right)=\prod_{i=1}^L p_\theta\left(\mathbf{x}_{t-1}^i \mid \mathbf{x}_{t-1}^{<i}, \mathbf{x}_{t: T}\right)
\end{equation}

where $\mathbf{x}_{t-1}^{<i}=\left(\mathbf{x}_{t-1}^1, \ldots, \mathbf{x}_{t-1}^{i-1}\right)$. This sequential factorization enables more accurate modeling of intrastep token dependencies compared to fully factorized approaches.

In our formulation, we further adopt the $x_0$-parameterization, predicting the clean sequence autoregressively and applying the forward corruption kernel to reconstruct $\mathbf{x}_{t-1}$. This leads to the following form:

\begin{equation}
p_\theta\left(\mathbf{x}_0 \mid \mathbf{x}_{t: T}\right)=\prod_{i=1}^L p_\theta\left(\mathbf{x}_0^i \mid \mathbf{x}_0^{<i}, \mathbf{x}_{t: T}\right)
\end{equation}

where $\mathbf{x}_0$ denotes the clean target sequence. This structure aligns well with standard causal language models, allowing autoregressive decoding over tokens while conditioning on the latent trajectory $\mathbf{x}_{t: T}$ as a static prompt.

In implementation, CaDDi-AR is trained using standard left-to-right causal masking, with the latent variables provided as additional context. During inference, the model samples $\mathbf{x}_0$ autoregressively at each timestep and applies the corruption kernel to obtain $\mathbf{x}_{t-1}$. To reduce computational cost, we employ semi-speculative decoding (Section~\ref{sec:semi-sepculative}) by partially reusing predictions from previous steps and verifying them in parallel.

This token-level decoding strategy provides finer granularity for generation, and is particularly effective in tasks requiring fluent, coherent text output or strong local consistency.




\setcounter{section}{4}
\section{Experiment Details}
\subsection{LM1B Dataset Experiment Details}\label{sec:appendix-metrics}

\paragraph{Dataset Preprocessing.}
We follow the preprocessing setup introduced in DiffusionBERT~\citep{he2022diffusionbertimprovinggenerativemasked}, using the One Billion Word Benchmark~\citep{chelba2014billionwordbenchmarkmeasuring}. Sentences are tokenized using the \texttt{bert-base-uncased} tokenizer with a vocabulary size of 30,522. All sequences are padded or truncated to a fixed length of 128 tokens during training.

\paragraph{Model Configuration.}
All models are based on a 12-layer Transformer decoder architecture with a hidden size of 768 and 12 attention heads. For D3PM~\citep{d3pm}, MDLM~\citep{mdlm}, and SEDD~\citep{lou2024discretediffusionmodelingestimating}, we adopt an absorbing diffusion kernel with a log-linear noise schedule. For CaDDi and CaDDi-AR, we use the absorbing-state forward kernel described in Section~\ref{prop:equivalence}, with total diffusion steps set to \(T = 64\). Note that models such as MDLM and SEDD are trained in continuous time, whereas our models operate in discrete time. CaDDi uses a context window of 5 and applies latent truncation as described in Section~\ref{sec:latent_trunc}. In this experiment, CaDDi-AR is trained entirely from scratch without leveraging any pretrained language model weights.

\paragraph{Training Details.}
Models are trained using AdamW with a learning rate of 3e-4, 2500 warm-up steps. All models use a batch size of 512 and train for 1000K steps. Our models are trained on 4 NVIDIA H100 GPUs with mixed precision.

\paragraph{Inference and Sampling.}
We evaluate generative perplexity using pretrained oracle models (GPT-2, LLaMA-2-7B, and LLaMA-3-3B) under three sampling temperatures: \(T = 1.0\), \(T = 0.7\), and \(T = 0.5\). For each evaluation, sequences are generated according to the length distribution of the dataset, and average perplexity is computed under the oracle model. All diffusion-based models use 64 denoising steps during inference. For CaDDi-AR, we additionally explore semi-speculative decoding to reduce sampling latency without sacrificing quality.

\paragraph{Evaluation Metrics.}
We report \textbf{oracle-based generative perplexity (Gen PPL)} as our primary metric for evaluating generation quality. Gen PPL is computed using a separate, pretrained causal language model, which assesses how well it can predict the next token in the generated sequence. Intuitively, lower perplexity indicates that the oracle model finds the generated text more coherent and predictable given the context—implying better fluency and consistency.

To mitigate known issues with perplexity, such as its tendency to reward repetitive or degenerate outputs~\citep{holtzman2020curiouscaseneuraltext}, we adopt \textbf{guided generative perplexity}. Specifically, we prepend a natural language prompt---\texttt{Does the following sentence make sense: }---to each sequence before evaluation. This encourages the oracle model to assess coherence in a more human-aligned fashion, reducing the risk of falsely low perplexity on poor-quality outputs.

In addition to quality, we assess diversity using \textbf{token-level entropy} over the generated output distribution without applying temperature scaling. This captures how varied the model’s outputs are across generations.

\subsection{Amazon Polarity Conditional Generation Details}\label{sec:amazon_process}

\paragraph{Dataset and Preprocessing.}
We use the Amazon Polarity dataset~\citep{mcauley2013hidden}, a large-scale binary sentiment classification corpus consisting of approximately 3.6 million product reviews labeled as either \texttt{positive} or \texttt{negative}. All reviews are tokenized using the \texttt{bert-base-uncased} tokenizer and truncated to a maximum length of 128 tokens. To enable conditional generation, we prepend a natural language sentiment prompt to each review. This prompt takes the form of a simple prefix indicating the intended sentiment (e.g., \texttt{positive} or \texttt{negative}), allowing standard causal language models—such as GPT-2—to generate sentiment-aligned outputs. Example formatted inputs are shown below:

\begin{itemize}
    \item \textbf{This is a positive review:}\\
    \textbf{Title:} \textit{Great!!}\\
    \textbf{Content:} \textit{"This product is amazing! The quality exceeded my expectations and I will definitely buy again."}
    
    \item \textbf{This is a negative review:}\\
    \textbf{Title:} \textit{Very disappointing}\\
    \textbf{Content:} \textit{"It broke after a week and customer service was not helpful."}
\end{itemize}

\paragraph{Conditional Generation Setup.}
We adapt our model to perform sentiment-controlled generation by training a unified denoising network \(\mu_\theta(\mathbf{c}, \mathbf{x}_{t:T})\), where \(\mathbf{c}\) denotes a conditioning input (e.g., a sentiment label prompt) and \(\mathbf{x}_{t:T}\) is the observed noisy trajectory. During training, we simulate both conditional and unconditional modes by randomly masking the conditioning input \(\mathbf{c}\). This enables the model to learn both \(\mu_\theta(\mathbf{c}, \mathbf{x}_{t:T})\) and \(\mu_\theta(\mathbf{x}_{t:T})\) within a single parameterization.

At inference time, we generate sentiment-aligned text using either direct conditioning (i.e., sampling from \(q(\mathbf{x}_{t-1} \mid \mu_\theta(\mathbf{c}, \mathbf{x}_{t:T}))\)) or classifier-free guidance. In the latter case, we apply the reweighted sampling distribution \(\tilde{q}(\mathbf{x}_{t-1} \mid \mu_\theta(\mathbf{c}, \mathbf{x}_{t:T}))\) as defined in Equation~\ref{eq:cfg}, which balances conditional and unconditional predictions using a guidance scale \(\gamma\). This allows finer control over sentiment alignment during generation.

\paragraph{Classifier-Free Guidance.}
We extend classifier-free guidance (CFG) to the unsupervised discrete diffusion setting by reweighting the sampling distribution at each reverse step. Specifically, we define a guided transition distribution over $\mathbf{x}_{t-1}$ as:

\begin{equation}
\label{eq:cfg}
\tilde{q}\left(\mathbf{x}_{t-1} \mid \mu_\theta(\mathbf{c}, \mathbf{x}_{t:T})\right) \propto 
\frac{q\left(\mathbf{x}_{t-1} \mid \mu_\theta(\mathbf{c}, \mathbf{x}_{t:T})\right)^{\gamma}}
{q\left(\mathbf{x}_{t-1} \mid \mu_\theta(\mathbf{x}_{t:T})\right)^{\gamma - 1}},
\end{equation}

where $\mathbf{c}$ denotes a conditioning signal (e.g., a sentiment label or prompt), and $\mu_\theta(\cdot)$ is the denoising network predicting the clean input $\mathbf{x}_0$ from the noisy trajectory $\mathbf{x}_{t:T}$. The numerator corresponds to the conditional denoising distribution, while the denominator represents the unconditional variant, where the conditioning input $\mathbf{c}$ is masked out. This formulation smoothly interpolates between conditional and unconditional behavior, analogous to CFG in continuous diffusion.

The guidance scale $\gamma \geq 1$ controls the strength of conditioning: when $\gamma = 1$, the model performs standard conditional generation; as $\gamma$ increases, the model places more emphasis on aligning with the conditioning signal, potentially improving controllability at the cost of diversity or fluency. In practice, we find that moderate values such as $\gamma = 1.0$ or $1.25$ strike a good balance between alignment and generation quality.

To enable this guidance, we train the denoising model jointly on both conditional and unconditional modes by randomly masking the conditioning input $\mathbf{c}$ during training. This allows the model to learn both behaviors within a single parameterization.

\paragraph{Evaluation.}
To evaluate sentiment alignment, we use a publicly available DistilBERT classifier fine-tuned on the Amazon Polarity dataset\footnote{\url{https://huggingface.co/kaustavbhattacharjee/finetuning-DistillBERT-amazon-polarity}}. For each sentiment label, we generate 1,000 samples using top-\(k\) sampling with \(k = 50\) and temperature \(T = 1.0\), across 64 denoising steps. Sentiment accuracy is computed as the percentage of generated samples whose predicted label matches the intended conditioning prompt. Results are reported in Table~\ref{tab:sentiment-comparison}.

For the GPT-2 baseline, we condition generation on the same prepended prompt used in our model and apply the same top-\(k\) sampling and temperature settings for consistency.

\subsection{Text8 Dataset Experiment Details}

\paragraph{Dataset Preprocessing.}
We use the standard Text8 dataset, a 100M character-level corpus derived from Wikipedia. Following prior work~\citep{d3pm, shi2024simplified}, we split the raw text into non-overlapping sequences of 256 characters. No tokenization is applied—each character is treated as a discrete token from an alphabet of size 27 (26 letters + space). We use the first 90\% of the dataset for training and the remaining 10\% for validation and testing.

\paragraph{Model Setup.}
All models use a 12-layer Transformer with hidden size 768 and 12 attention heads. We adopt the absorbing-state forward kernel described in Section~\ref{prop:equivalence}. For CaDDi, we use $T = 64$ denoising steps. The default context window is set to 5, and we apply latent truncation and trajectory recomposition as described in Section~\ref{sec:latent_compression}. Baselines (D3PM, SEDD, MDLM, UDLM) are either reimplemented or taken from their official codebases using the configuration for fair comparison.

\paragraph{Training Details.}
Models are trained using AdamW with a learning rate of 3e-4 and 2,500 warm-up steps. We use a batch size of 512 and train for 1M steps. We use the simplified ELBO objective for absorbing kernels as described in Equation~(8), which reduces to a weighted cross-entropy loss over clean targets.

\paragraph{Evaluation Metrics.}
We report bits-per-character (BPC) on the test set, computed as:
\[
\text{BPC} = -\frac{1}{L} \sum_{i=1}^L \log_2 p(x_i),
\]
where $L$ is the sequence length and $p(x_i)$ is the predicted probability of the $i$-th character. For diffusion-based models, we compute a variational upper bound on the log-likelihood using the ELBO objective. For autoregressive baselines, we report the true NLL.

\textbf{Note}: We do not report likelihood-reltaed metrics for CaDDi-AR due to its token-level autoregressive decomposition under \(\mathbf{x}_0\)-parameterization \(p_\theta\left(\mathbf{x}_{0} \mid \mathbf{x}_{t: T}\right)=\prod_{i=0}^L p_\theta\left(\mathbf{x}_{0}^i \mid \mathbf{x}_{0}^{<i}, \mathbf{x}_{t: T}\right)\). This formulation makes direct evaluation of \(\log p_\theta\left(\mathbf{x}_{t-1}^i \mid \mathbf{x}_{t: T}\right)\) intractable, as it requires marginalizing over all possible prefix sequences $\mathbf{x}_0^{<i}$. A tractable lower bound can be estimated via:

\begin{equation}
\log p_\theta\left(\mathbf{x}_{t-1}^i \mid \mathbf{x}_{t: T}\right) \geqslant \E_{\mathbf{x}_0^{<i} \sim p_\theta\left(\mathbf{x}_0^{<i} \mid \mathbf{x}_{t: T}\right)} \log p_\theta\left(\mathbf{x}_{t-1}^i \mid \mathbf{x}_0^{<i}, \mathbf{x}_{t: T}\right)
\end{equation}

but this introduces an additional approximation gap in likelihood estimation, making reported values less directly comparable.

\paragraph{Comparison Notes.}
Likelihood estimation of diffusion model is sensitive to the discretization of timesteps: as the number of steps increases, perplexity typically decreases. To ensure a fair comparison, all diffusion-based models are evaluated using 64 denoising steps for consistency. Some prior works report results under continuous-time settings or with 1000-step discretizations; we include these numbers for reference but highlight the corresponding rows in gray in Table~\ref{tab:text8} to indicate that they are comparable.

\subsection{General Language Reasoning Dataset Details}

\paragraph{Dataset Overview.}
We evaluate CaDDi-AR on a set of natural language understanding benchmarks covering commonsense reasoning, factual QA, and reading comprehension:
\begin{itemize}
    \item \textbf{ARC-Challenge / ARC-Easy}~\citep{arc}: multiple-choice science questions.
    \item \textbf{BoolQ}~\citep{clark2019boolq}: binary reasoning dataset based on a short context.
    \item \textbf{PIQA}~\citep{Bisk2020}: commonsense physical reasoning questions with two-choice answers.
    \item \textbf{RACE}~\citep{lai-etal-2017-race}: multi-choice reading comprehension from English exams.
    \item \textbf{Social IQA}~\citep{sap2019socialiqacommonsensereasoningsocial}: social commonsense reasoning dataset.
    \item \textbf{LAMBADA}~\citep{lambada_dataset}: cloze-style word prediction requiring broad context understanding.
\end{itemize}


\paragraph{Fine-tuning Setup.}
Following~\citep{nie2025scalingmaskeddiffusionmodels},we fine-tune CaDDi-AR on ShareGPT\footnote{https://sharegpt.com/} dataset from a pretrained QWen-1.5B checkpoint using our diffusion-based objective with \(T = 64\) steps. The model is trained using AdamW with a learning rate of 5e-5, a batch size of 64 with gradient accumulation, and 20K total steps. We adopt the absorbing kernel formulation with simplified ELBO as the training loss.

\paragraph{Inference Procedure.}
We evaluate CaDDi-AR on standard natural language reasoning tasks using the Language Model Evaluation Harness~\citep{eval-harness}, a widely adopted framework for assessing pretrained language models on benchmark datasets. Following common practice, we convert each task into a text completion format and measure the log-likelihood of the correct answer under the model. The final prediction is selected as the choice with the highest likelihood. This approach ensures consistency across models with different architectures (e.g., ARMs and MDMs). We report \textbf{accuracy} as the primary evaluation metric for all datasets.

\paragraph{Baselines.}
We compare CaDDi-AR against several established language models of comparable scale. Specifically, we include GPT-2 (1.5B parameters), TinyLLaMA (1.1B), and MDM (1.1B), a recently proposed diffusion-based language model. To ensure a fair comparison, all baselines are evaluated using the same prompt formatting and inference procedure as described above. For MDM, we directly use the performance numbers reported in the original paper.

\setcounter{section}{6}
\section{Additional Experiments and Ablation Study}
\label{sec:addition_experiment}




\subsection{Effect of Sampling Steps on Generative Quality}
\label{sec:scaling_lm1b}
\begin{figure}[H]
    \centering
    \includegraphics[width=\linewidth]{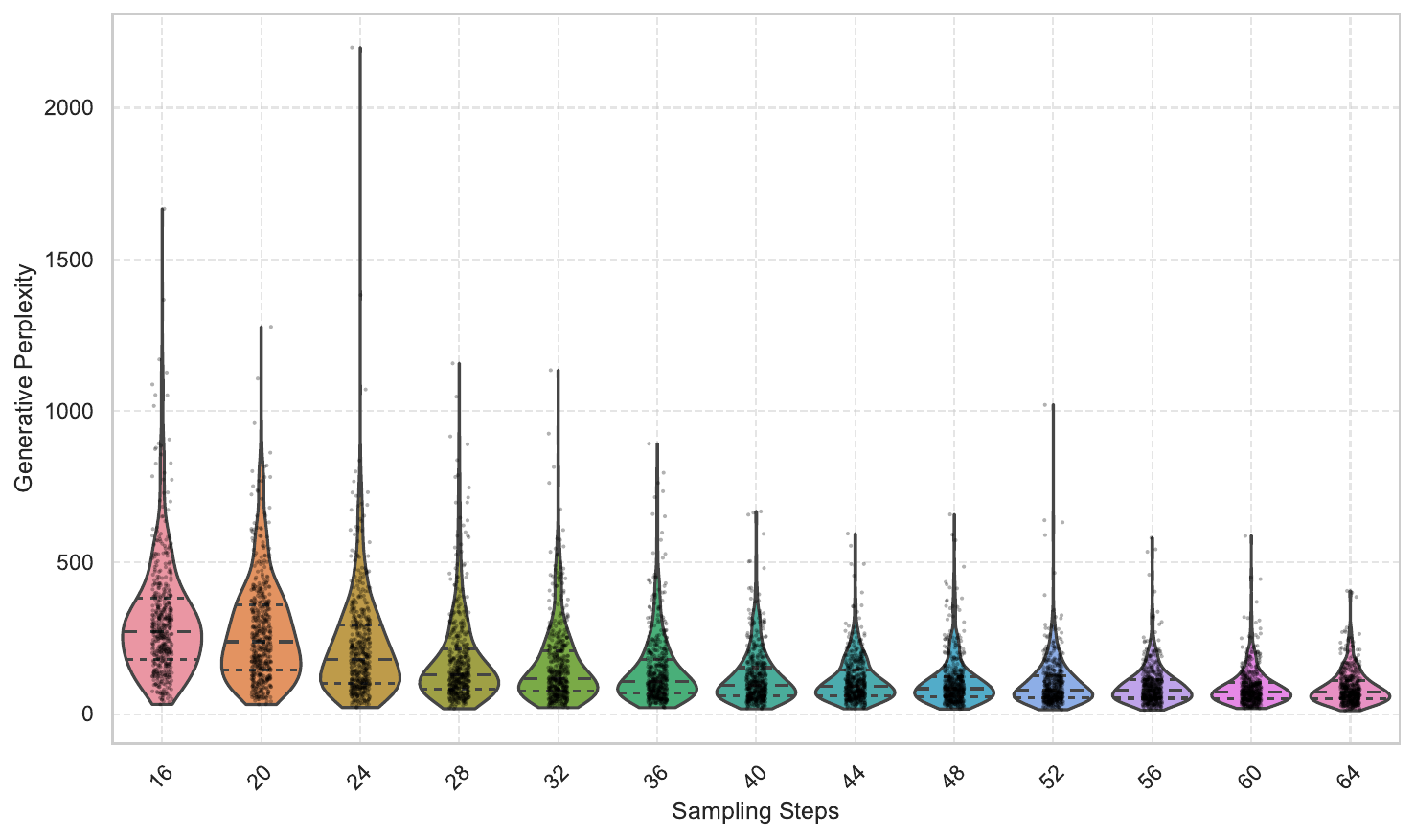}
    \caption{Generative Perplexity Distribution Across Sampling Steps}
    \label{fig:violin_sampling_steps}
\end{figure}
To investigate the relationship between sampling budget and generative quality, we evaluate the generation of CaDDi model trained with LM1B dataset under varying numbers of sampling steps. We report generative perplexity as the evaluation metric, computed from oracle model's likelihood on generated sentences.

Unlike prior work in continuous-time settings~\citep{mdlm, lou2024discretediffusionmodelingestimating}, CaDDi is trained with a fixed discrete timestep schedule ( $T=64$ ). To enable adaptive sampling with fewer denoising steps at inference, we employ a uniform step-skipping strategy. Specifically, for selected timesteps $t$, we directly use the most recent prediction of $\mathbf{x}_0$ to sample from the corresponding latent distribution $q\left(\mathbf{x}_t \mid \mathbf{x}_0\right)$, without invoking the neural network for prediction. This allows efficient generation under a reduced sampling budget while preserving the learned diffusion trajectory.

Figure~\ref{fig:violin_sampling_steps} presents a violin plot illustrating the distribution of perplexity scores as a function of the number of denoising steps. As the number of sampling steps increases, we observe a consistent reduction in perplexity, indicating improved sample quality. This trend suggests that the model benefits from longer refinement trajectories, with more steps allowing finer-grained correction of uncertainty during generation.

The observed perplexity curve approximately follows a scaling behavior, reminiscent of trends in autoregressive models where performance improves predictably with increased compute or depth. This hints at a potential scaling law in non-Markovian discrete diffusion generation: sample quality improves smoothly with increased inference budget. Future work may explore formalizing this relationship and connecting it to theoretical underpinnings of discrete-time inference refinement.

\subsection{Other Ablation Study}
\begin{figure}[t]
    \centering
    \includegraphics[width=0.9\linewidth]{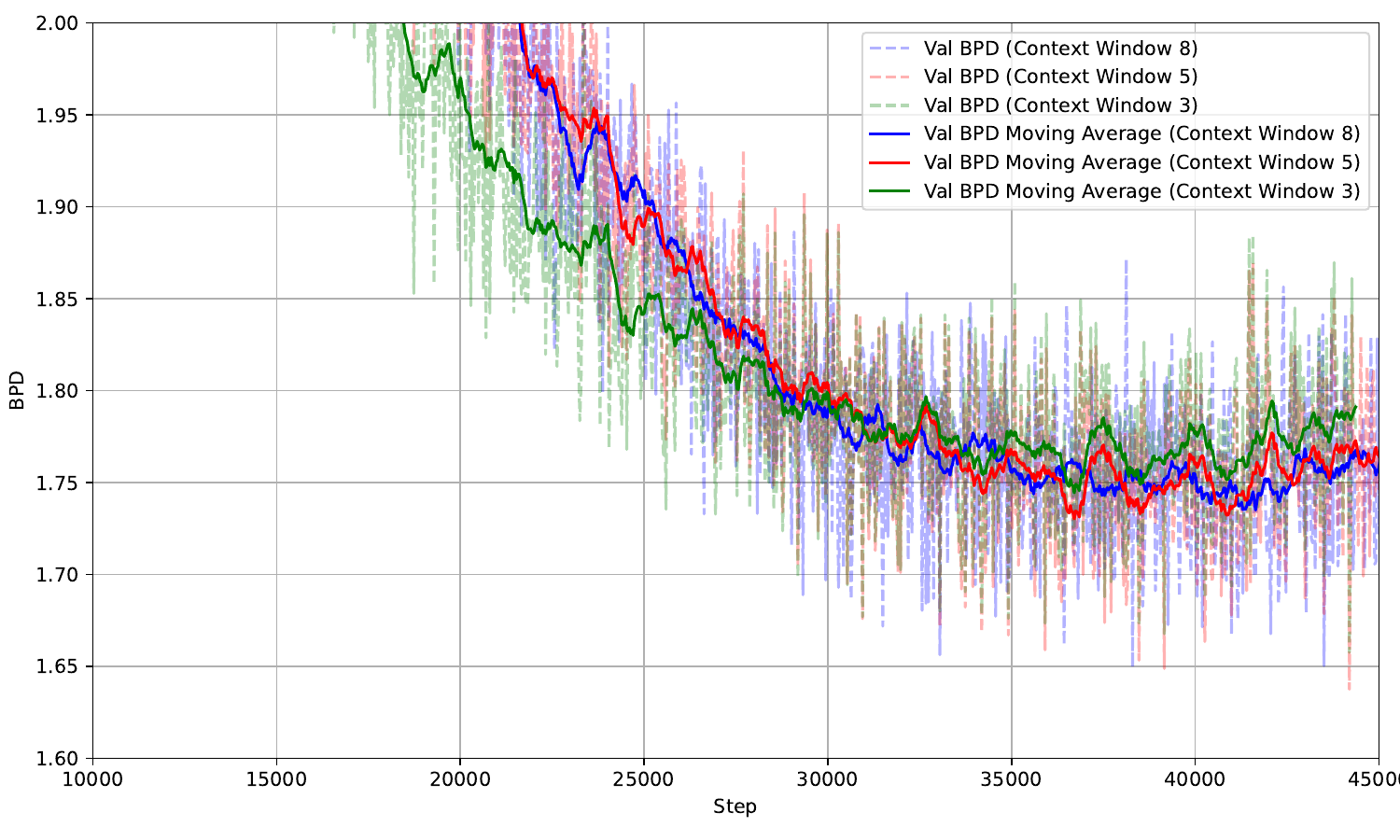}
    \caption{Validation BPD over training steps for different context window sizes (3, 5, and 8). Dashed lines represent raw validation BPD at each step, while solid lines show the smoothed moving average. Larger context windows consistently yield lower BPD, indicating improved modeling of long-range dependencies. However, context window 5 offers a favorable balance between performance and stability.}
    \label{fig:loss_curve}
\end{figure}

To assess the impact of key architectural and training design choices, we conduct an ablation study on a subset of the Text8 dataset, using only the first 10\% of the training data. We focus on three factors: the number of diffusion steps, the choice of positional encoding, and the size of the context window used during latent truncation, as described in Section~\ref{sec:latent_trunc}. Results are shown in Table~\ref{tab:ablation}, using bits-per-dimension (BPD), perplexity, and negative log-likelihood (NLL) as evaluation metrics.

\paragraph{Diffusion steps.} Increasing the number of diffusion steps from 16 to 128 consistently improves performance across all metrics. Specifically, BPD decreases from 2.013 to 1.712, and perplexity drops from 4.04 to 3.28. This suggests that additional refinement steps lead to more accurate modeling of the data distribution, albeit at increased computational cost.

\paragraph{Positional encoding.} We compare our 2D RoPE encoding (used in CaDDi) with 1D RoPE and sinusoidal RoPE variants. The CaDDi encoding achieves the best results, while alternative encodings result in slight performance degradation. This highlights the importance of aligning positional encodings with the inductive biases of the diffusion-based architecture.

\paragraph{Context window.} We vary the context window size during latent truncation and observe a trade-off between context length and model performance. A window size of 8 yields the best results (BPD = 1.740) but incurs higher computational cost. A window size of 5 offers a favorable balance, achieving competitive performance with reduced resource requirements. Accordingly, we adopt it as the default setting in our experiments. We also plot the learning dynamic of different context window size in Figure~\ref{fig:loss_curve}. These findings support the intuition that larger context windows aid in modeling long-range dependencies, though benefits plateau beyond a certain size.

\paragraph{Attention masking.} 
We compare two attention masking strategies: a block-level causal mask and a token-level causal mask combined with Causal Bidirectional Augmentation, as described in Section~\ref{sec:causal_bidrectional}. The two approaches yield nearly identical performance. However, the token-level mask with bidirectional augmentation offers practical advantages: it enables accelerated inference through flash attention and is inherently more compatible with pretrained large language models.

\begin{table}[t]
  \centering
  \small 
  \caption{Ablation study evaluating the impact of diffusion steps, positional encoding, context window size, and attention masking strategies. Default condiguration is denoted with *}
  \label{tab:ablation}
  \begin{tabular}{ll
                S[table-format=1.3]
                S[table-format=1.4]
                S[table-format=1.4]}
    \toprule
    \multicolumn{2}{l}{\textbf{Configuration}} & \textbf{BPD} & \textbf{Perplexity} & \textbf{NLL} \\
    \midrule
    \multirow{4}{*}{\textit{Diffusion steps}} 
      & CaDDi-16 steps    & 2.013 & 4.0362 & 1.3953 \\
      & CaDDi-32 steps    & 1.845 & 3.5925 & 1.2789 \\
      & CaDDi-64 steps*    & 1.751 & 3.3659 & 1.2137 \\
      & CaDDi-128 steps   & 1.712 & 3.2761 & 1.1867 \\
    \midrule
    \multirow{3}{*}{\textit{Positional Encoding}} 
      & 2D RoPE*          & 1.751 & 3.3659 & 1.2137 \\
      & 1D RoPE           & 1.773 & 3.4176 & 1.2290 \\
      & Sinusoidal RoPE   & 1.801 & 3.4846 & 1.2484 \\
    \midrule
    \multirow{3}{*}{\textit{Context Window}} 
      & Window-8          & 1.740 & 3.3404 & 1.2061 \\
      & Window-5*          & 1.751 & 3.3659 & 1.2137 \\
      & Window-3          & 1.791 & 3.4605 & 1.2414 \\
    \midrule
    \multirow{2}{*}{\textit{Attention Mask}} 
      & Block-level causal mask              & 1.747 & 3.3566 & 1.2109 \\
      & \begin{tabular}[c]{@{}l@{}}Token-level causal mask \\ + Causal Bidirectional Aug.\end{tabular}*  & 1.751 & 3.3659 & 1.2137 \\
    \bottomrule
  \end{tabular}
\end{table}

\subsection{Inference Robustness Under Noise Injection}
\begin{figure}[t]
    \centering
    \includegraphics[width=0.7\linewidth]{paragraphs/figures/robust.pdf}
    \caption{Generation performance under manually injected noise at different timestep}
    \label{fig:robust_in_appendix}
\end{figure}

To further assess the robustness of our proposed CaDDi model during generation, we conduct a controlled perturbation study in which synthetic noise is manually injected into the latent trajectory at various timesteps. This simulates potential inference-time errors and allows us to systematically examine how different models propagate and recover from early-stage inaccuracies.

\textbf{Setup}. At a chosen timestep $t \in\{T / 2,9 T / 16,5 T / 8, \ldots, T\}$, we randomly corrupt a subset of tokens in the latent variable $\mathbf{x}_t$ by replacing them with uniformly specific values from the vocabulary, thereby introducing controlled deviation from the expected latent distribution. We then allow the model to proceed with the remaining reverse steps without additional interventions and evaluate the final output quality.

\textbf{Metrics}. We report generative perplexity computed against oracle LMs (e.g., GPT-2), following the evaluation procedure in Section 5.2. Lower perplexity indicates stronger resilience to the injected noise.

\textbf{Findings}. As shown in Figure ~\ref{fig:robust_in_appendix}, CaDDi maintains significantly lower perplexity under all noise injection conditions compared to D3PM and MDLM. This highlights the benefit of CaDDi's non-Markovian architecture, which conditions on the full generative trajectory rather than a single latent state. Notably, CaDDi is more effective at self-correction when noise is injected early (e.g., $t=T / 2$ ), where traditional Markovian models suffer the most from cascading inference errors.

These results validate the design motivation behind CaDDi and demonstrate its ability to recover from intermediate perturbations, making it particularly well-suited for scenarios with partial or uncertain context.

\newpage
\section{Generation Samples from \method{}}


\subsection{Amazon Polarity}

\begin{figure*}[h]
\centering
\begin{minipage}{0.9\textwidth}  
\hrule height 0.8pt
\vspace{0.5em}
\small

\textcolor{purple}{this is a negative review:}

\textbf{title} : were they afraid of newborns?

\textbf{content} : this book was terrible. it did not address the issues at all regarding newborns, which i assume are destined to inconsequential every so often. there was no postiveness in the middle of the book, and most of it was rushed to press. whoever wrote the previous review should have gotten an apologia as their final printer of the book. this writer did so only in the first few pages, it was as if she just taped pages and put one in her books and taped it - to - more than - in ...

\vspace{0.5em}
\hrule
\vspace{0.5em}

\textcolor{purple}{this is a negative review:}

\textbf{title} : blechhhhh content

\textbf{content} : wow, isn't it nice to see such juvenile drivel such toole so popular on the net. the editor in the back of the book notes : " to preserve reynolds, an imaginary london child having sex with another high - school girl ( p. s. ) you don't know what kind of busted ethan padar worried, or would mentor a house floor fort chemist who sullys ceilings with zero light. the whole novel is lame, poorly written and awfully mediocre. the " plot " is transparent from the first page ...

\vspace{0.5em}
\hrule
\vspace{0.5em}

\textcolor{purple}{this is a negative review:}

\textbf{title} : pile of 1 \% crap

\textbf{content} : this movie was disaster. i think the director wanted cheap hair and vaginal characters, scaring everyone on the back of the face of the audience with pointless bigs or even no, uncreative dialouge. the ruined 30 minutes most of this movie, rushes in and out of the movie. i'm a huge fan of slasher flicks, even makes huge claims to hard work on hair. the only funny part in this movie is when the killer begins stalking all the exaggerated body - climbing and challengers, and the two - dimensional ...

\vspace{0.5em}
\hrule
\vspace{0.5em}

this is a negative review:

\textbf{title} : \textcolor{purple}{what an insult to music lovers!}

\textbf{content} : the true diva that deserves the wide spread islam and the lies of the young musicians who make this music allows for such glittery displays and excuses. she never needs to spend our high time singing this " exquisite in its gem ". the artist is just a shadow of her most famous but she has no talent. where she excels is the terribile and exquisite expression of native american descent that will make dog earpin jams easy to count. she has a good voice but she hasn't mastered it. oh, she needs to go ...

\vspace{0.5em}
\hrule
\vspace{0.5em}

this is a negative review:

\textbf{title} : \textcolor{purple}{miserable album content}

\textbf{content} : i owned this album and when it came out i loved it. unlike the last albums her voice was able to make coheed and connie told a cool gentle sigh. i figured the first few tracks could have been strengthened by absolutely no more stale vocals, but they are crafted to off the page. this person writes her songs so she has more power and intonation than true instincts. i like to listen to the song that she was destined to write. get the single years that are runaways. " only excuse me on the radio " is sweetly sung by anyone, ...

\vspace{0.5em}
\hrule height 0.8pt
\end{minipage}
\caption{Sample generated using \method{}, conditioned on a negative prompt. In each sample, the \textcolor{purple}{purple} text indicates the prompt.}
\label{fig:amazon_result_negative}
\end{figure*}

\newpage
\makeatletter
\setlength{\@fptop}{0pt}   
\makeatother
\begin{figure*}[t]
\centering
\begin{minipage}{0.9\textwidth}  
\hrule height 0.8pt
\vspace{0.5em}
\small

\textcolor{purple}{this is a positive review :}

\textbf{title} : one for the head bob fans!

\textbf{content} : i've won stadium since the mid 70's. this double lp has all of the greatest hits on it! so why not split their support. this has soul in the covers ( not on the cover ). a hercules and a freddie and theatrics contribution is true, and paul petrie and leslie heralbull rock at the same time. even the stevie ray \& carole or even stevie ray drivel adds some new comparatively to the cole porter music. a truly unique collection of a performers work, and the cd succeeds ...

\vspace{0.5em}
\hrule
\vspace{0.5em}

\textcolor{purple}{this is a positive review :}

\textbf{title} : jk rowling is a sci - fi crime novelist!

\textbf{content} : in harry's hell - called scream, jk rowling gets back the bullet to get to jk's responsiblity. i liked this book so much. the storyline was great, the characters well drawn and great, and jk rowling made me care about what happened. this is the second book in the harry " hallie " tray fouls trilogy including a sister named calvin, who is young. lucy prince is young again following her sister and her intense ex - boyfriend michaelk bo ...

\vspace{0.5em}
\hrule
\vspace{0.5em}

\textcolor{purple}{this is a positive review : }

\textbf{title} : a great story and excellent special effects and overall worth the film

\textbf{content} : trust me, this is much better than the original. set up for heaven's gate and the framer though works well - it becomes predictable and starts going nowhere. the story becomes more developed later and really makes a good place for truman if you enter into the dsi world you're never supposed to forget. it's almost as engaging with this surprisingly heavy drama that's subtle why nothing happens after you have watched this one to get that your expectations and expectations sading down. i recommend definitely seeing this ...

\vspace{0.5em}
\hrule
\vspace{0.5em}

this is a positive review:

\textbf{title} : \textcolor{purple}{simply the best}

\textbf{content} : i have been wanting to have a safe with my son my son. we used to play w / his cardboard toy box, but thought this was just a hazard. well, after true terror in my baby's swing, i hunted down the safe and amazon had the best price i could find. great product that was a huge relief from our efforts. my 7 month old loves to play with this at least 10 different times that i put him in the swing. i can see how i will get this useful for the one he has. i am so glad i ...

\vspace{0.5em}
\hrule
\vspace{0.5em}

this is a positive review:

\textbf{title} : \textcolor{purple}{a surprise!}

\textbf{content} : this box is a prelude to the first golden age of horror crooning ( which correctly apes de croes as he tends to say a wayans juveniles below us ). and these are individual breakfast of murders. the writers of these crime / mysteries / new orleans exhibit that encompasses the catholic church in their b \& w documents. best blurb of all those who would read about the errors they witness and don't believe. an interesting twist ending to the loose ends. hercule poirot is cast as a smart cop undercover. the two big funny g ...

\vspace{0.5em}
\hrule height 0.8pt
\end{minipage}
\caption{Sample generated using \method{}, conditioned on a positive prompt. In each sample, the \textcolor{purple}{purple} text indicates the prompt.}
\label{fig:amazon_result_positive}
\end{figure*}


\end{document}